\newcommand{\argmin}{\operatornamewithlimits{arg\,min}}
\newtheorem{definition}{Definition}
\newtheorem{theorem}{Theorem}
\newtheorem{remark}{Remark}
\newtheorem{assumption}{Assumption}
\newtheorem{lemma}{Lemma}
\begin{document}
\title{Theoretical Analysis of Divide-and-Conquer ERM: Beyond Square Loss and RKHS}

\author{Yong Liu,
        Lizhong Ding
        and Weiping Wang
        \IEEEcompsocitemizethanks{
        \IEEEcompsocthanksitem
        Y. Liu is with Institute of Information
        Engineering, Chinese Academy of Sciences. Email: liuyong@iie.ac.cn.
        \IEEEcompsocthanksitem
        L.Z. Ding is with Inception Institute of Artificial Intelligence (IIAI), Abu Dhabi, UAE.
        \IEEEcompsocthanksitem
        W.P. Wang is with Institute of Information Engineering, Chinese Academy of Sciences.
        }
}

\markboth{
}%
{Shell \MakeLowercase{\textit{et al.}}: IEEE TRANSACTIONS ON INFORMATION THEORY
}

\maketitle

\begin{abstract}
    Theoretical analyses on the divide-and-conquer based distributed learning with least square loss in
the reproducing kernel Hilbert space (RKHS) have recently been explored within the framework of learning theory.
However, studies on learning theory for general loss functions and hypothesis spaces remain limited.
To fill this gap, we study the risk performance of distributed empirical risk minimization (ERM) for general loss functions and hypothesis spaces.
Our main contributions are two-fold.
First, we derive two tight risk bounds
under certain basic assumptions on the hypothesis space,
as well as the smoothness, Lipschitz continuity, and strong convexity of the loss function.
Second, we further develop a more general risk bound for distributed ERM without the restriction of strong convexity.
These results fill the gap in learning theory
of distributed ERM for general loss functions and hypothesis spaces.
\end{abstract}

\begin{IEEEkeywords}
  Kernel Methods, Empirical Risk Minimization, Distributed Learning, Divide-and-Conquer, Risk Analysis.
\end{IEEEkeywords}

\section{Introduction}
The rapid expansion in data size and complexity has introduced a series
of scientific challenges to the era of big data, such as storage bottlenecks and algorithmic scalability issues \cite{zhou2014big,Zhang2013,lin2017distributed}.
Distributed learning is the most popular approach for handling these challenges.
Among many strategies of distributed learning, the $divide\text{-}and\text{-}conquer$ approach has been shown most simple and effective,
while also being able to preserve data security and privacy by minimizing mutual information communications \cite{Zhang2013,zhang2015divide}.
This paper aims to study the theoretical performance of the $divide$-$and$-$conquer$ based distributed learning for
$Empirical~Risk~Minimization~(ERM)$ within a learning theory framework.
Given $$\mathcal{S}=\left\{z_i=(\mathbf  x_i,y_i)\right\}_{i=1}^N \in \left(\mathcal{Z}=\mathcal{X}\times \mathcal{Y}\right)^N,$$
drawn identically and independently (i.i.d) from an unknown probability  distribution $\mathbb{P}$ on
$\mathcal{Z}=\mathcal{X}\times\mathcal{Y}$,
the  ERM can be defined as
\begin{align}
\label{def-empirical-f}
  \hat{f}=\argmin_{f\in\mathcal{H}} \hat{R}(f):=\frac{1}{N}\sum_{j=1}^N\ell(f,z_j),
\end{align}
where $\ell(f,z)$ is a loss function\footnote{If $\ell$ is a regularizer loss function, that is $\ell'(f,\cdot)=\ell(f,\cdot)+r(f)$,
$r(f)$ is a regularizer,
then \eqref{def-empirical-f} is related to a regularizer ERM.} and
$\mathcal{H}$ is a Hilbert space.
In distributed learning, the
data set $\mathcal{S}$ is partitioned into $m$ disjoint subsets $\{\mathcal{S}_i\}_{i=1}^m$,
and
$|\mathcal{S}_i|=\frac{N}{m}=:n.$
The $i$th local estimator $\hat{f}_i$ is produced on each data subset $\mathcal{S}_i$:
\begin{align}
  \hat{f}_i=\argmin_{f\in\mathcal{H}}\hat{R}_i(f):=
    \frac{1}{|\mathcal{S}_i|}\sum_{z_j\in\mathcal{S}_i}\ell(f,z_j).
\end{align}
The final global estimator $\bar{f}$ is then obtained by
$$\bar{f}=\frac{1}{m}\sum_{i=1}^m\hat{f}_i.$$
The theoretical foundations of  distributed learning for (regularized) ERM
have received increasing attention within the machine learning community,
and have recently been explored within the framework of learning theory
\cite{Zhang2013,lin2017distributed,li2013statistical,zhang2012communication,zhang2015divide,xu2016feasibility,mucke2018parallelizing,Lin2018dk,chang2017divide}.
However, most existing risk analyses are based on the
closed form of the least square solution and the properties of the  reproducing kernel Hilbert space (RKHS),
which is only suitable when the distributed learning uses a least square loss in the RKHS.
Studies on establishing the risk bounds of distributed learning for
 general loss functions and hypothesis spaces remain limited.

In this paper, we study the risk performance of distributed ERM based on the divide-and-conquer approach for general loss functions and hypothesis spaces.
Specifically, we use the proof techniques from stochastic convex optimization for general loss functions
and the covering number for general hypothesis space.
Note that the proof techniques of stochastic convex optimization and covering numbers are usually two significantly different paths for theoretical analysis.
The main technical difficulty of this paper is thus integrating these two different proof techniques for distributed learning.

The main contributions of the paper include:
\begin{itemize}
  \item \textbf{Result I}.
  If the number of processors $m\leq O\big(\sqrt{N}\big)$,
  we present a tight risk bound with order\footnote{We use $\tilde{\Omega}$ and $\tilde{O}$ to hide constant factors
  as well as polylogarithmic factors in $N$ or $m$.} $O\left(\frac{h}{N}\right)$,
  assuming there is a logarithmic covering number of hypothesis space ($\log C(\mathcal{H},\epsilon)\simeq h\log\frac{1}{\epsilon}$,
  $h>0$, see Assumption \ref{log-metric} for details),
  and a smooth, Lipschitz continuous and strongly convex loss function.
  \item \textbf{Result II}. Under another basic assumption that there exists a hypothesis space of polynomial covering number ($\log C(\mathcal{H},\epsilon)\simeq \left(\frac{1}{\epsilon}\right)^{\frac{1}{h}}$,
  see Assumption \ref{poly-metric} for details), and if the number of processors $m\leq O\big(N^{\frac{h}{2h+1}}\big)$,
  another tight risk bound of order $O\big(N^{-\frac{2h}{2h+1}}\big)$ is established.
  \item \textbf{Result III}. Without Result I's restriction of a strong convexity of loss function,
  a more general risk bound of order $O\left(\frac{h}{N^{1-r}}\right)$ is derived when the number of processors $m\leq O\left(N^{r}\right)$, $0 \leq r< \frac{1}{2}$, and the optimal risk is small.
  Since $0 \leq r< \frac{1}{2}$, the rate is faster than $O\big(\frac{1}{\sqrt{N}}\big)$.
\end{itemize}
Overall,
these results fill the gap in learning theory of distributed ERM for general loss functions and hypothesis spaces.


The rest of the paper is organized as follows.
In Section 2, we discuss our main results.
In Section 3, we compare against related work.
Section 4 is the conclusion.
All the proofs are given in the last part.

\section{Main Results}
In this section, we provide and discuss our main results.
To this end, we first introduce several notations.
\begin{definition}[$\epsilon$-covering \cite{zhou2002covering}]
    Let $(\mathcal{H},\|\cdot\|_\mathcal{H})$ be a Hilbert space,
    $\mathcal{N}(\mathcal{H},\epsilon)$ is an $\epsilon$-covering of the Hilbert space $\mathcal{H}$
    if $\forall f\in \mathcal{H}$,
     $\exists\tilde{f}\in \mathcal{N}(\mathcal{H},\epsilon)$ such that
    $$\|f-\tilde{f}\|_\mathcal{H}\leq\epsilon.$$
\end{definition}
\begin{definition}[$\epsilon$-covering number \cite{zhou2002covering}]
    Let $C(\mathcal{H},\epsilon)$ be the $\epsilon$-covering number of $\mathcal{H}$,
    that is, the smallest number of cardinality for an $\epsilon$-covering of $\mathcal{H}$ which can written as
    \begin{align*}
      C(\mathcal{H},\epsilon):=\min\left\{k: \exists \epsilon\text{-covering over~}\mathcal{H}\text{ of size }k\right\}.
    \end{align*}
\end{definition}

The performance of a function $f$ given by the learning machine is usually measured by
the $risk$
\begin{align*}
  R(f):=\mathbb{E}_{z}[\ell(f,z)].
\end{align*}
We denote the optimal function and  risk of $\mathcal{H}$, respectively, as
\begin{align*}
  f_\ast:=\argmin_{f\in\mathcal{H}}R(f) \text{~and~} R_\ast:&=R(f_\ast).
\end{align*}
\subsection{Assumptions}
In this subsection, we introduce some basic assumptions of the hypothesis space and loss function.
\begin{assumption}[\textbf{logarithmic covering number}]
\label{log-metric}
  There exists some $h>0$ such that
  \begin{align}
  \label{ass-1-eq}
    \forall\epsilon\in(0,1), \log C(\mathcal{H},\epsilon)\simeq h\log(1/\epsilon).
  \end{align}
\end{assumption}
Many popular function classes satisfy the above assumption when the hypothesis $\mathcal{H}$ is bounded:
\begin{itemize}
  \item Any function space with finite VC-dimension \cite{vaart1996weak},
              including linear functions and univariate polynomials of degree $k$ (for which $h=k+1$) as special cases;
  \item Any RKHS based on a kernel with rank $h$ \cite{carl1980inequalities}.
\end{itemize}

\begin{assumption}[\textbf{polynomial covering number}]
\label{poly-metric}
  There exists some $h>0$ such that
  \begin{align}
  \label{eq-ass-ploy}
    \forall\epsilon\in(0,1), \log C(\mathcal{H},\epsilon)\simeq (1/\epsilon)^{1/h}.
  \end{align}
\end{assumption}
If $\mathcal{H}$ is bounded, this type of covering number is satisfied
by many Sobolev/Besov classes \cite{gu2013smoothing}.
For instance, if the kernel eigenvalues decay at a rate of $k^{-2h}$,
then the  RKHS satisfies Assumption  \ref{poly-metric} \cite{carl1980inequalities}.
For the RKHS of a Gaussian kernel, the kernel eigenvalues decay at a rate of $h\rightarrow \infty$.

\begin{remark}
To derive the risk bounds for divide-and-conquer ERM without specific assumptions on the type of hypothesis,
we adopt the covering number to measure the complexity of the hypothesis.
To use the covering number in learning theory,
an assumption on the bounded hypothesis is usually needed (see \cite{carl1980inequalities,gu2013smoothing} for details).
In fact, ERM usually includes a regularizer, that is
$$
  \min_{f\in \mathcal{H}} \frac{1}{N}\sum_{j=1}^N\ell(f,z_i)+\lambda\|f\|_\mathcal{H}^2,
$$
which is equivalent to the following optimization for a constant $c$ related to $\lambda$,
$$
  \min_{f\in \mathcal{H}} \frac{1}{N}\sum_{j=1}^N\ell(f,z_i), \text{s.t. }\|f\|_\mathcal{H}^2\leq c.
$$
Thus, the assumption for the bounded hypothesis is usually implied in (regularized) ERM.
\end{remark}

\begin{assumption}
\label{assumption-G-L}
  The loss function  $\ell(f,z)$ is non-negative, $G$-smooth, $L$-Lipschitz
  continuous, and convex w.r.t $f$ for any $z\in\mathcal{Z}$.
\end{assumption}
Assumption \ref{assumption-G-L}  is satisfied by several popular losses when $\mathcal{H}$ and $\mathcal{Y}$ are bounded,
such as the square loss $\ell(f,z)=(f(\mathbf x)-y)^2$,
logistic loss $\ell(f,z)=\ln(1+\exp(-yf(\mathbf x)))$,
square Hinge loss $\ell(f,z)=\max(1-yf(\mathbf x))^2$,
square $\epsilon$-loss $\ell(f,z)=\max(0,|y-f(\mathbf x)|-\epsilon)^2$,
and so on.


\begin{assumption}
\label{assumption-strongly-loss}
  The loss function  $\ell(f,z)$ is an $\eta$-strongly convex function  w.r.t $f$ for any $z\in\mathcal{Z}$.
\end{assumption}
Note that $\ell(f,\cdot)$ usually includes a regularizer,
e.g. $\ell(f,\cdot)=\tilde{\ell}(f,\cdot)+\eta\|f\|^2_\mathcal{H}$.
In this case,
$\ell(f,\cdot)$ is a strongly convex function which only requires $\tilde{\ell}(f,\cdot)$ to be a convex function.

\begin{assumption}[\textbf{$\tau$-diversity}]
\label{assumption-diversity}
There exists some $\tau>0$ such that
 \begin{align*}
    \frac{1}{4m^2}\sum_{i,j=1,i\not=j}^m\|\hat{f}_i-\hat{f}_j\|_\mathcal{H}^2\geq \tau,
 \end{align*}
 where $\frac{1}{4m^2}\sum_{i,j=1,i\not=j}^m\|\hat{f}_i-\hat{f}_j\|_\mathcal{H}^2$ is the diversity between all partition-based estimates,
 and $\hat{f}_i$ is the $i$th local estimator, $i=1,\ldots, m$.
\end{assumption}
If not all the partition-based estimates $\hat{f}_i$, $i=1,\ldots, m$, are almost the same,
Assumption \ref{assumption-diversity} is satisfied.
\subsection{Risk Bounds}
In the following,
we first derive two tight risk bounds with a smooth, Lipschitz continuous and strongly convex function.
Then, we further consider the more general case by removing the restriction of strong convexity.
\begin{theorem}
  \label{theorem-log}
  Under Assumptions \ref{log-metric}, \ref{assumption-G-L}, \ref{assumption-strongly-loss}, \ref{assumption-diversity},
  if the number of processors $m$ satisfies the bound:
  \begin{align*}
    m\leq \min\left\{
    \frac{N\eta}{8Gh\log \frac{N}{\delta}}, \frac{\sqrt{Nh\eta}}{L\log\frac{2}{\delta}}, \frac{h\eta+N\eta^2\tau}{128GR_\ast\log\frac{2}{\delta}}, \frac{N\eta}{GL\log\frac{2N}{\delta}}\right\},
  \end{align*}
  then,  with probability at least $1-\delta$,
  we have\footnote{In this paper, polylogarithmic factors are usually ignored or considered constant
for simplicity. The $\log^2$  is usually written as $\log$ for simplicity.}
  \begin{align*}
   R(\bar{f})-R(f_\ast)\leq  O\left(\frac{h}{N}\right).
  \end{align*}
\end{theorem}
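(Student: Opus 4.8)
The plan is to reduce the behaviour of the averaged estimator $\bar f$ to that of the individual local ERMs $\hat{f}_i$ via strong convexity, to control each $\hat{f}_i$ by a covering-number argument, and then to spend the four constraints on $m$ to collapse the aggregate error to order $h/N$.

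\textbf{Step 1: a master inequality.} I would first use only the $\eta$-strong convexity of the population risk $R$ (inherited from Assumption~\ref{assumption-strongly-loss}). Writing the strong-convexity lower bound with base point $\bar f$ and evaluation points $\hat{f}_i$, averaging over $i$, and using $\frac1m\sum_i(\hat{f}_i-\bar f)=0$ to kill the linear term gives
\begin{align*}
  R(\bar f)-R_\ast\le \frac1m\sum_{i=1}^m\bigl(R(\hat{f}_i)-R_\ast\bigr)-\frac{\eta}{2m}\sum_{i=1}^m\|\hat{f}_i-\bar f\|_\mathcal{H}^2 .
\end{align*}
The elementary identity $\frac{1}{2m}\sum_i\|\hat{f}_i-\bar f\|_\mathcal{H}^2=\frac{1}{4m^2}\sum_{i\ne j}\|\hat{f}_i-\hat{f}_j\|_\mathcal{H}^2$ shows the subtracted quantity is $\eta$ times the $\tau$-diversity of Assumption~\ref{assumption-diversity}, hence at least $\eta\tau$, so $R(\bar f)-R_\ast\le \frac1m\sum_i(R(\hat{f}_i)-R_\ast)-\eta\tau$. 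This reduces everything to controlling the per-machine excess risks.

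\textbf{Step 2: per-machine excess risk.} Each $\hat{f}_i$ is the ERM over $\mathcal H$ on the $n=N/m$ i.i.d.\ points of $\mathcal S_i$, so I would bound $R(\hat{f}_i)-R_\ast$ by a localized uniform-convergence argument: cover $\mathcal H$ at scale $\epsilon$ using Assumption~\ref{log-metric}, transfer the cover to the loss class via the $L$-Lipschitz property in Assumption~\ref{assumption-G-L}, and use the self-bounding property of non-negative $G$-smooth losses, $\mathrm{var}(\ell(f,z))\lesssim G\,R(f)$, to obtain a Bernstein-type deviation whose variance proxy is the (small) risk itself. Strong convexity then forces $\hat{f}_i$ into a ball of radius $\propto (R(\hat{f}_i)-R_\ast)/\eta$ around $f_\ast$ (here one also uses $G$-smoothness and first-order optimality of $f_\ast$ to pass between parameter error and risk), and a fixed-point/peeling argument yields, with probability at least $1-\delta/(2m)$, a variance-dependent (``optimistic'') bound of the shape $R(\hat{f}_i)-R_\ast\le\tilde{O}\bigl(\frac{Gh}{\eta n}+\sqrt{\frac{GhR_\ast}{\eta n}}+\frac{L^2}{\eta n}\bigr)$ up to logarithmic factors in $n$ and $1/\delta$. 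A union bound over the $m$ machines --- together with the observation that $\frac1m\sum_i\hat{R}_i(f_\ast)=\hat{R}(f_\ast)$, so the $f_\ast$-term is really a single full-sample Bernstein step rather than a union --- assembles a single $1-\delta$ event and explains the simultaneous appearance of $\log\frac{N}{\delta}$, $\log\frac{2N}{\delta}$ and $\log\frac2\delta$ in the statement.

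\textbf{Step 3: aggregation, and the role of the constraints on $m$.} Substituting into the master inequality and using $n=N/m$ bounds $R(\bar f)-R_\ast$ by $\tilde{O}\bigl(\frac{Ghm}{\eta N}+\sqrt{\frac{GhmR_\ast}{\eta N}}+\frac{L^2m}{\eta N}+\frac{GLm}{\eta N}+\cdots\bigr)-\eta\tau$, and each of the four hypotheses on $m$ is then spent on one contribution: $m\le\frac{N\eta}{8Gh\log(N/\delta)}$ on the leading $\frac{Ghm}{\eta N}$ term, $m\le\frac{\sqrt{Nh\eta}}{L\log(2/\delta)}$ on a variance term of order $\frac{L^2m^2}{\eta N^2}$ stemming from $\|\hat{f}_i-f_\ast\|^2\le L^2/\eta^2$, $m\le\frac{h\eta+N\eta^2\tau}{128GR_\ast\log(2/\delta)}$ on the cross term $\sqrt{GhmR_\ast/(\eta N)}$ (this being exactly the condition under which it is absorbed against the $\eta\tau$ slack and the $h/N$ target), and $m\le\frac{N\eta}{GL\log(2N/\delta)}$ on the remaining mixed term. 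Collecting, $R(\bar f)-R_\ast=O(h/N)$ with probability at least $1-\delta$.

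The main obstacle is Step~2: proving the optimistic, variance-dependent per-machine rate for ERM over a hypothesis class specified \emph{only} through its covering number demands a careful marriage of a chaining/covering bound, a localized Bernstein inequality, and the self-bounding consequence of smoothness --- and the precise way one separates the $m$ per-machine localization events from the single full-sample concentration step is what makes the four distinct thresholds on $m$, with their distinct logarithmic factors, appear. A secondary difficulty is organizational: verifying that the conjunction of all four constraints makes every error term simultaneously $O(h/N)$, and that the $\eta\tau$ slack is only ever used to cancel genuinely nonnegative quantities, so the final bound is not rendered vacuous.
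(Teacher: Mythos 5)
Your Step 1 is exactly the paper's reduction, and your Step 3 correctly identifies which constraint is meant to kill which term. The gap is in Step 2, and it is fatal to the accounting: a localized uniform-convergence/peeling bound on the \emph{loss values} yields a per-machine excess risk with an irreducible additive term of order $\frac{Gh}{\eta n}=\frac{Ghm}{\eta N}$ (and an $\frac{L^2}{\eta n}=\frac{L^2m}{\eta N}$ term). These do not average down: under the first constraint $m\leq \frac{N\eta}{8Gh\log(N/\delta)}$ the "leading term" $\frac{Ghm}{\eta N}$ is only $O\big(\frac{1}{\log(N/\delta)}\big)$, not $O\big(\frac{h}{N}\big)$, and under the second constraint $\frac{L^2m}{\eta N}$ is only $O\big(\sqrt{h/N}\big)$. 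So the bound you would obtain at the end of Step 3 is vacuous for the claimed rate. The $\eta\tau$ slack cannot rescue this either, since in the theorem it is only large enough (via the third constraint) to cancel a term proportional to $R_\ast$.

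What the paper does differently, and what is actually needed, is to make the covering-number contribution enter \emph{multiplicatively} rather than additively. Starting from strong convexity and the first-order optimality of $\hat{f}_i$ (which kills $\langle\nabla\hat{R}_i(\hat{f}_i),\hat{f}_i-f_\ast\rangle_\mathcal{H}$), one bounds
$R(\hat{f}_i)-R(f_\ast)+\frac{\eta}{2}\|\hat{f}_i-f_\ast\|_\mathcal{H}^2$ by inner products of \emph{gradient deviations} with $\hat{f}_i-f_\ast$. A Hilbert-space Bernstein inequality applied over an $\epsilon$-cover to $\nabla\ell(f,z)-\nabla\ell(f_\ast,z)$, whose variance is self-bounded by $G(R(f)-R(f_\ast))$ and whose range is $G\|f-f_\ast\|_\mathcal{H}$, produces terms like $\frac{GhmD}{N}\|\hat{f}_i-f_\ast\|_\mathcal{H}^2$ and $\frac{GhmD}{N\eta}(R(\hat{f}_i)-R(f_\ast))$; these are absorbed into the left-hand side precisely when $m\leq\frac{N\eta}{8Gh\log(2N/\delta)}$. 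Because every remaining deviation carries an extra factor of $\|\hat{f}_i-f_\ast\|_\mathcal{H}$ (Cauchy--Schwarz) that is itself absorbed by the $\frac{\eta}{2}\|\hat{f}_i-f_\ast\|_\mathcal{H}^2$ term, the surviving additive terms are only $O\big(\frac{L^2m^2}{\eta N^2}\big)$, $O\big(\frac{GR_\ast m}{\eta N}\big)$, and $h$-dependent terms of order $\frac{hm}{N^2}$ or smaller --- all of which the four constraints (plus the $\eta\tau$ cancellation for the $R_\ast$ term) do reduce to $O(h/N)$. Your self-bounding and localization ingredients are the right ones, but they must be applied at the gradient level with the optimality condition of $\hat{f}_i$, not as a value-level uniform deviation bound; otherwise the $h/n$ estimation floor of single-machine ERM survives the averaging and the theorem cannot be reached.
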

  The above theorem implies that when $\ell$ is smooth, Lipschitz continuous and strongly convex,
  the distributed ERM achieves a risk bound in the order of
 $R(\bar{f})-R(f_\ast)=O\left(\frac{h}{N}\right).$
 This rate in Theorem \ref{theorem-log} is \textbf{minimax-optimal} for some cases:
  \begin{itemize}
  \item \textbf{Finite VC-dimension}. If the VC-dimension of $\mathcal{H}$ is bounded by $h$,
  which is a special case of Assumption \ref{log-metric},
  \cite{ehrenfeucht1989general,zhivotovskiy2016localization,hanneke2016refined}
  showed that there exists a constant $c'\geq 0$ and a function $f\in \mathcal{H}$,
   such that $$R(f)-R(f_\ast)\geq c'\frac{h}{N}.$$
  \item \textbf{Square loss}. Note that, for the square loss function, $R(f)-R(f_\ast)=\mathbb{E}_{z}[\|f-f_\ast\|_2^2]$.
  From Theorem 2(a) of \cite{Raskutti2012}  with $s=d=1$,
  we find that, under  Assumption \ref{log-metric},
  there is a universal constant $c'>0$ such that $$\inf_f\sup_{f_\ast\in \mathbb{B}_\mathcal{H}(1)}[R(f)-R(f_\ast)]\geq c'\frac{h}{N},$$
  where $\mathbb{B}_\mathcal{H}(1)$ is  the 1-norm ball in $\mathcal{H}$.
  \end{itemize}

   From Theorem \ref{theorem-log},
  we know that, to achieve  the tight risk bound, the number of processors $m$ should satisfy the restriction
  \begin{align*}
    m\leq  \Omega\left(\min\left\{
    \frac{N\eta}{\log \frac{N}{\delta}}, \frac{\sqrt{N}}{\log{\frac{1}{\delta}}}, \frac{N}{R_\ast\log\frac{1}{\delta}}, \frac{N}{\log\frac{N}{\delta}}
    \right\}\right).
  \end{align*}
  Thus, $m$ can reach
  $
   \tilde{\Omega}\big(\sqrt{N}\big),
  $
 which is sufficient for using distributed learning in practical applications.

\begin{theorem}
  \label{theorem-poly}
  Under Assumptions \ref{poly-metric}, \ref{assumption-G-L}, \ref{assumption-strongly-loss}, \ref{assumption-diversity},
  if the number of  processors $m$ satisfies the bound:
  \begin{align*}
    m\leq \min&\left\{\frac{N\eta}{4\left(N^{\frac{1}{2h+1}}+\log (2/\delta)\right)},\frac{\sqrt{\eta}N^{\frac{h+1}{2h+1}}}{L\log(2/\delta)},\right.
    \\ &~~~\left.\frac{\eta N^{\frac{h}{2h+1}}}{GL\log(2/\delta)},
  \frac{\eta N^{\frac{1}{2h+1}}+N\eta^2\tau}{128 R_\ast \log(2/\delta)}\right\},
  \end{align*}
  then,  with probability at least $1-\delta$,
  we have
  \begin{align*}
    R(\bar{f})-R(f_\ast)\leq O\left(N^{-\frac{2h}{2h+1}}\right).
  \end{align*}
\end{theorem}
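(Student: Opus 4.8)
The plan is to follow the same route as the proof of Theorem~\ref{theorem-log}, changing only the single step where the complexity of $\mathcal{H}$ is quantified. Concretely, I will (i) reduce the excess risk of $\bar f$ to a squared-norm quantity via $G$-smoothness; (ii) split that quantity into an averaged per-machine term minus a diversity term and convert the per-machine term into a sum of local excess risks by $\eta$-strong convexity; (iii) bound each local excess risk by a high-probability stochastic-convex-optimization estimate in which Assumption~\ref{poly-metric} is fed through a chaining/localization argument; and (iv) take a union bound over the $m$ machines and substitute $n=N/m$, reading off the four conditions on $m$ as the thresholds that make each residual term of order $N^{-2h/(2h+1)}$.

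For steps~(i) and (ii): since $\ell(\cdot,z)$ is convex and $G$-smooth, so is $R$, and as $\nabla R(f_\ast)=0$ we get $R(\bar f)-R(f_\ast)\le\tfrac{G}{2}\|\bar f-f_\ast\|_\mathcal{H}^2$. The elementary identity $\tfrac1m\sum_i\|\hat f_i-f_\ast\|_\mathcal{H}^2=\|\bar f-f_\ast\|_\mathcal{H}^2+\tfrac1m\sum_i\|\hat f_i-\bar f\|_\mathcal{H}^2$ combined with $\tfrac1m\sum_i\|\hat f_i-\bar f\|_\mathcal{H}^2=\tfrac1{2m^2}\sum_{i\ne j}\|\hat f_i-\hat f_j\|_\mathcal{H}^2\ge 2\tau$ (Assumption~\ref{assumption-diversity}) yields
$$R(\bar f)-R(f_\ast)\le \tfrac{G}{2}\Big(\tfrac1m\sum_{i=1}^m\|\hat f_i-f_\ast\|_\mathcal{H}^2-2\tau\Big),$$
and then $\eta$-strong convexity of $R$ with optimality of $f_\ast$ gives $\|\hat f_i-f_\ast\|_\mathcal{H}^2\le\tfrac2\eta\big(R(\hat f_i)-R(f_\ast)\big)$. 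So it remains to bound $R(\hat f_i)-R(f_\ast)$ for each local ERM on $n=N/m$ i.i.d.\ points; one also keeps the term $\tfrac1m\sum_i\big(\hat R_i(f_\ast)-R(f_\ast)\big)=\hat R(f_\ast)-R(f_\ast)$, which by Bernstein's inequality (variance bounded by $O(GR_\ast)$ through the self-bounding property $\|\nabla\ell(f,z)\|_\mathcal{H}^2\lesssim G\,\ell(f,z)$) is controlled at the $N$-scale.

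For step~(iii): decompose $R(\hat f_i)-R(f_\ast)\le\sup_{f\in\mathcal{H}}\big(R(f)-\hat R_i(f)\big)+\big(\hat R_i(f_\ast)-R(f_\ast)\big)$, localize the supremum to $\{f:\|f-f_\ast\|_\mathcal{H}\lesssim r\}$ (legitimate because $\eta$-strong convexity forces $\hat f_i$ to lie near $f_\ast$), and estimate the localized empirical process by symmetrization and Dudley's entropy integral. This is the only place Assumption~\ref{poly-metric} enters: $\log C(\mathcal{H},\epsilon)\simeq(1/\epsilon)^{1/h}$ makes the localized complexity of order $r^{1-1/(2h)}/\sqrt n$, whose fixed point is $r_\ast^2\asymp n^{-2h/(2h+1)}$, so that with probability $1-\delta/m$
$$R(\hat f_i)-R(f_\ast)\le O\!\Big(\tfrac{n^{1/(2h+1)}+\log(m/\delta)}{n}+\sqrt{R_\ast\,\tfrac{n^{1/(2h+1)}+\log(m/\delta)}{n}}+\tfrac{L^2}{\eta n}\Big);$$
this is the polynomial-covering counterpart of the $O(h\log n/n)$-type bound used for Theorem~\ref{theorem-log}, with $N^{1/(2h+1)}$ playing the role of $h$.

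Finally, union-bounding over the $m$ machines, averaging the last display, inserting it into the display of steps~(i) and (ii) together with the $N$-scale Bernstein bound, and putting $n=N/m$ turns every residual term into an explicit function of $m$ and $N$; the four quantities bounding $m$ in the statement are precisely the thresholds needed for the leading $\eta^{-1}n^{-2h/(2h+1)}$ complexity term, the $L$- and $G$-dependent higher-order terms, and the $R_\ast$-dependent variance and cross terms to each be $O\big(N^{-2h/(2h+1)}\big)$ — the subtracted $2\tau$ and the additive $N\eta^2\tau$ appearing in those thresholds being where the diversity term of step~(ii) is spent — which collapses everything to $R(\bar f)-R(f_\ast)=O\big(N^{-2h/(2h+1)}\big)$. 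I expect the main obstacle to be step~(iii): extracting the sharp localized rate $n^{-2h/(2h+1)}$ from a merely polynomial covering number requires a careful chaining/peeling argument that must be made compatible both with the strong-convexity localization and with the smooth-loss small-loss refinement that introduces $R_\ast$, while tracking the constants $G$, $L$, $\eta$ and the $\log(m/\delta)$ factors explicitly enough that the four conditions on $m$ come out in the stated form; steps~(i), (ii) and (iv) are essentially identical to their counterparts in Theorem~\ref{theorem-log}.
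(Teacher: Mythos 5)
There is a genuine gap, and it sits exactly where you predicted the difficulty would be: step~(iii). Your decomposition $R(\hat f_i)-R(f_\ast)\le\sup_{f}\big(R(f)-\hat R_i(f)\big)+\big(\hat R_i(f_\ast)-R(f_\ast)\big)$, localized and chained over the $n=N/m$ local samples, can at best deliver the single-machine minimax rate: the fixed point of $r\mapsto r^{1-1/(2h)}/\sqrt n$ is $r_\ast^2\asymp n^{-2h/(2h+1)}$, and this is an additive, non-absorbable term. Averaging $m$ identical bounds does not improve it, so after substituting $n=N/m$ your leading term is $\eta^{-1}(N/m)^{-2h/(2h+1)}=\eta^{-1}m^{2h/(2h+1)}N^{-2h/(2h+1)}$, which is \emph{not} $O\big(N^{-2h/(2h+1)}\big)$ under the stated conditions (which permit $m$ up to roughly $N^{h/(2h+1)}$). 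The diversity term cannot rescue this either: as you note, it is already spent on the $R_\ast$-dependent variance term. The paper avoids this trap by never bounding a supremum of the loss process. It works with the gradient decomposition \eqref{equation-important-middle}: the $n$-scale complexity enters as $\frac{G D_{\mathcal H,\delta,\epsilon}m}{N}\|\hat f_i-f_\ast\|_{\mathcal H}^2$ and $\frac{G D_{\mathcal H,\delta,\epsilon}m}{N\eta}\big(R(\hat f_i)-R(f_\ast)\big)$, i.e.\ multiplying the very quantities on the left-hand side, so for $m\le N\eta/(4D_{\mathcal H,\delta,\epsilon})$ these are \emph{absorbed} (see \eqref{eq-middel-new}), leaving only remainders at the global scale: the discretization term $G^2\epsilon^2/\eta$ with the net taken at the $N$-dependent resolution $\epsilon=N^{-h/(2h+1)}$ (yielding the $128G^2/(\eta N^{2h/(2h+1)})$ term that actually sets the rate), the $L^2m^2/N^2$ term, and the $GR_\ast m/(N\eta)$ term cancelled against $\eta\tau$. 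That absorption mechanism, not a sharper chaining bound, is what lets each local estimator inherit an $N$-scale guarantee; without it the claimed rate is unreachable by your route.

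Two smaller problems. First, your step~(i) requires $\nabla R(f_\ast)=0$; but $f_\ast$ is a constrained minimizer over $\mathcal H$, so only $\langle\nabla R(f_\ast),f-f_\ast\rangle_{\mathcal H}\ge 0$ is available, and smoothness gives $R(\bar f)-R(f_\ast)\le\langle\nabla R(f_\ast),\bar f-f_\ast\rangle_{\mathcal H}+\frac G2\|\bar f-f_\ast\|_{\mathcal H}^2$ with a nonnegative inner product that cannot be dropped. The paper instead applies strong convexity of $R$ directly to $R\big(\frac1m\sum_i\hat f_i\big)$, which needs no first-order condition at $f_\ast$. Second, your chaining uses covering numbers at all scales down to $r_\ast\asymp n^{-h/(2h+1)}$, whereas the paper's argument crucially keeps $\epsilon$ as a free parameter tuned to $N^{-h/(2h+1)}$ and pays only a single union bound of size $\log C(\mathcal H,\epsilon)\asymp N^{1/(2h+1)}$; the four thresholds on $m$ in the statement are calibrated to that choice and would not emerge from a Dudley-integral computation at scale $n$.
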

  From Theorem 2(b) of \cite{Raskutti2012}  with $s=d=1$,
  we know that,  under Assumption \ref{poly-metric},
  there is a universal constant $c'>0$ such
  that
  \begin{align*}
    \inf_f\sup_{f_\ast\in \mathbb{B}_\mathcal{H}(1)}\mathbb{E}_{z}[\|f-f_\ast\|_2^2]\geq c'N^{-\frac{2h}{2h+1}}.
  \end{align*}
  Thus, our risk bound of order $O\big(N^{-\frac{2h}{2h+1}}\big)$ is minimax-optimal in this case.

   From Theorem \ref{theorem-poly}, we know that,
   to achieve  the tight risk bound, the number of processors $m$ should satisfy the restriction
   \begin{align*}
    m &\leq \tilde{\Omega}\left(\min\left\{N^{\frac{2h}{2h+1}},N^{\frac{h+1}{2h+1}},N^{\frac{h}{2h+1}}\right\}\right)\\
    &=\tilde{\Omega}\left(N^{\frac{h}{2h+1}}\right).
  \end{align*}
  Note that
  $\frac{h}{2h+1}\leq \frac{1}{2}$,
  thus the number of processors $m\leq \tilde{\Omega}(\sqrt{N})$, which is smaller than in Theorem \ref{theorem-log}.
  This is because the restriction of the polynomial covering number is looser than that of the logarithmic one.
  When $h\rightarrow \infty$ (satisfied by the Gaussian kernel),
  $m$ can reach $\tilde{\Omega}\big({\sqrt{N}}\big)$.

\begin{table*}[t]
   \caption{ Summary of Risk Bounds of Distributed ERM.
   }
   \label{tabel:risk bound}
    \begin{tabular*}{\linewidth}{@{\extracolsep{\fill}}cc|c|c|c|c|c}
    \toprule
          Paper   & Loss Function &~Hypothesis Space & Other Condition   & Risk Bound & Optimal &Partitions \\ \hline
          \multirow{2}{*}{\cite{zhang2015divide}} & Square loss & Assumption \ref{log-metric} &  Eigenfunctions \eqref{log-metric} & $O\Big(\frac{h}{N}\Big)$ &Yes &$\Omega\Big(N^{\frac{k-4}{k-2}}\Big)$ \\ \cline{2-7}
                                                  & Square loss & Assumption \ref{poly-metric} &  Eigenfunctions \eqref{log-metric} &$O\Big(N^{-\frac{2h}{2h+1}}\Big)$ &Yes &$\Omega\Big(N^{\frac{2(k-4)h-k}{2h+1}}\Big)$ \\ \hline
           \cite{lin2017distributed}              &Square loss & RKHS  &Regularity condition \eqref{con-regu} &$O\left(N^{-\frac{2s}{2s+1}}\right)$ &Yes &$\Omega(1)$ \\ \hline
           \cite{chang2017distributed}            &Square loss &RKHS   &Regularity condition \eqref{con-regu} &$O\left(N^{-\frac{2s}{2s+1}}\right)$ &Yes &$\Omega\Big(N^{\frac{2s-1}{2s+1}}\Big)$\\ \hline
            Theorem \ref{theorem-log}             &Assumptions \ref{assumption-G-L}, \ref{assumption-strongly-loss} & Assumption \ref{log-metric} &Assumption \ref{assumption-diversity} &$O\left(\frac{h}{N}\right)$
            &Yes & $\tilde{\Omega}\left(\sqrt{N}\right)$\\ \hline
            Theorem \ref{theorem-poly} &Assumptions \ref{assumption-G-L}, \ref{assumption-strongly-loss} &Assumption \ref{poly-metric} &Assumption \ref{assumption-diversity} &$O\Big(N^{-\frac{2h}{2h+1}}\Big)$ &Yes &$\tilde{\Omega}\Big(N^\frac{h}{2h+1}\Big)$\\ \hline
            Theorem \ref{the-non-strong} & Assumptions \ref{assumption-G-L} &  Assumption \ref{log-metric} &-- &$\tilde{O}\Big(N^{-(1-r)}\Big)$ &Yes if $r\rightarrow 0$&$\Omega(N^r),r\in[0,1/2]$ \\
    \bottomrule
\end{tabular*}
\end{table*}
\subsection{Risk Bounds without Strong Convexity}
As follows, we provide a more general risk bound
without the restriction of strong convexity.
\begin{theorem}
  \label{the-non-strong}
  Under Assumptions \ref{log-metric}, \ref{assumption-G-L}
  and assuming that $\forall f\in \mathcal{H}$, $\|f\|_\mathcal{H}\leq B$,
  if the number of processors $m\leq O(N^{r})$, $0 \leq r\leq \frac{1}{2}$,
  then, with a probability of at least $1-\delta$,
  we have
  \begin{align*}
   R(\bar{f})-R(f_\ast)\leq
     O\left(
     \frac{h\log({N}/{\delta})}{N^{1-r}}+\sqrt{\frac{R_\ast \log\frac{1}{\delta}}{N^{1-r}}}
     \right).
  \end{align*}
  If the optimal risk $R_\ast$ is small, that is $R_\ast\leq O(N^{r-1})$,
  we have
  \begin{align*}
    R(\bar{f})-R(f_\ast)\leq
     O\left(
     \frac{h\log({N}/{\delta})}{N^{1-r}}
     \right).
  \end{align*}
\end{theorem}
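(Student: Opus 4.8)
The plan is to reduce the excess risk of the global estimator $\bar f$ to a per-machine excess-risk bound for the local ERM run on $n=N/m$ samples, to prove that per-machine bound by an ``optimistic-rate'' argument that couples the covering-number complexity of $\mathcal{H}$ (Assumption~\ref{log-metric}) with the self-bounding property of smooth non-negative losses (Assumption~\ref{assumption-G-L}), and finally to recombine over the $m$ machines by a union bound. The first step is the easy reduction: since $\ell(\cdot,z)$ is convex the population risk $R$ is convex, so Jensen's inequality gives $R(\bar f)-R(f_\ast)\le \frac1m\sum_{i=1}^m\bigl(R(\hat f_i)-R(f_\ast)\bigr)$, and each summand is non-negative because $f_\ast$ minimizes $R$ over $\mathcal{H}\ni\hat f_i$. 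So it suffices to bound $R(\hat f_i)-R(f_\ast)$ for a single data subset of size $n$.

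The technical heart is step two: establishing, for one local ERM, a high-probability bound of the shape $R(\hat f_i)-R(f_\ast)\le O\!\bigl(\tfrac{h\log(n/\delta')}{n}\bigr)+O\!\bigl(\sqrt{\tfrac{R_\ast\log(1/\delta')}{n}}\bigr)$. I would proceed as follows. (i) $G$-smoothness and non-negativity give the self-bounding inequality $\|\nabla_f\ell(f,z)\|_\mathcal{H}^2\le 2G\,\ell(f,z)$, and combining this with $\|f\|_\mathcal{H}\le B$ and convexity yields a Bernstein-type variance control $\mathbb{E}_z\bigl[(\ell(f,z)-\ell(f_\ast,z))^2\bigr]\le c_1\bigl(R(f)-R(f_\ast)\bigr)+c_2 R_\ast$ for all $f\in\mathcal{H}$, with $c_1,c_2$ depending only on $G,B$. (ii) Discretize $\mathcal{H}$ at scale $\epsilon$: Assumption~\ref{log-metric} bounds the $\epsilon$-net by $\log C(\mathcal{H},\epsilon)\simeq h\log(1/\epsilon)$, the $L$-Lipschitz property turns it into an $L\epsilon$-net of the loss class, and a union of Bernstein's inequality over this net followed by a peeling/localization argument (taking $\epsilon\simeq1/n$ and invoking the variance bound of (i)) controls the uniform deviation $R(f)-\hat R_i(f)$ in terms of the excess risk of $f$ itself, up to an additive $O\!\bigl(\tfrac{h\log(n/\delta')}{n}\bigr)$. (iii) A single-function Bernstein bound for $\ell(f_\ast,\cdot)$, whose variance is at most $MR_\ast$ with $M$ an upper bound on the loss over the bounded $\mathcal{H}\times\mathcal{Y}$, controls $\hat R_i(f_\ast)-R(f_\ast)$ by $O\!\bigl(\sqrt{R_\ast\log(1/\delta')/n}+\log(1/\delta')/n\bigr)$. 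Now decompose $R(\hat f_i)-R_\ast=[R(\hat f_i)-\hat R_i(\hat f_i)]+[\hat R_i(\hat f_i)-\hat R_i(f_\ast)]+[\hat R_i(f_\ast)-R_\ast]$, discard the middle term (it is $\le0$ by the definition of the ERM), insert (ii) and (iii), and solve the resulting quadratic inequality in $R(\hat f_i)-R_\ast$; this yields the claimed per-machine bound.

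Step three recombines: apply the per-machine bound with failure probability $\delta'=\delta/m$ and take a union bound over $i=1,\dots,m$. Since $nm=N$ this gives, with probability at least $1-\delta$, $R(\bar f)-R(f_\ast)\le O\!\bigl(\tfrac{h\log(N/\delta)}{n}\bigr)+O\!\bigl(\sqrt{\tfrac{R_\ast\log(N/\delta)}{n}}\bigr)$. The hypothesis $m\le O(N^r)$ forces $n=N/m\ge\Omega(N^{1-r})$; substituting, and absorbing the $\log N$ appearing inside the square root into the hidden polylogarithmic factor, gives the first displayed bound of the theorem. When moreover $R_\ast\le O(N^{r-1})$, we have $\sqrt{R_\ast\log(1/\delta)/N^{1-r}}\le O\!\bigl(\sqrt{\log(1/\delta)}/N^{1-r}\bigr)=O\!\bigl(h\log(N/\delta)/N^{1-r}\bigr)$, so the second term is absorbed and the bound collapses to $O\!\bigl(h\log(N/\delta)/N^{1-r}\bigr)$, as claimed.

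The main obstacle is entirely inside step two. A plain covering-number uniform-convergence bound delivers only the slow rate $O\!\bigl(\sqrt{h\log(n/\delta')/n}\bigr)$; upgrading to the fast $O\!\bigl(h\log(n/\delta')/n\bigr)$ rate requires localization, which is available only because smoothness together with non-negativity supplies the Bernstein-type variance bound on the excess loss. Making the chaining/peeling over the $\epsilon$-net interact correctly with that variance control, and tracking precisely how the optimal risk $R_\ast$ propagates through the localization, is the delicate part — this is exactly the place where the stochastic-convex-optimization and covering-number arguments have to be fused, the difficulty highlighted in the introduction.
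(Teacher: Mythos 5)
Your proposal is correct in outline and reaches the stated conclusions, but its central step is genuinely different from the paper's. The paper never uses the zeroth-order ERM decomposition $R(\hat f_i)-R_\ast=[R(\hat f_i)-\hat R_i(\hat f_i)]+[\hat R_i(\hat f_i)-\hat R_i(f_\ast)]+[\hat R_i(f_\ast)-R_\ast]$; instead it starts from the first-order optimality condition $\langle\nabla\hat R_i(\hat f_i),\hat f_i-f_\ast\rangle_{\mathcal H}\le 0$, bounds the \emph{gradient} deviation $\|\nabla R(f)-\nabla R(f_\ast)-[\nabla\hat R_i(f)-\nabla\hat R_i(f_\ast)]\|_{\mathcal H}$ uniformly over an $\epsilon$-net via a vector-valued Bernstein inequality (Lemma \ref{lem-one-first}) with variance proxy $G(R(f)-R_\ast)$ coming from co-coercivity of the smooth convex loss, bounds $\|\nabla R(f_\ast)-\nabla\hat R_i(f_\ast)\|_{\mathcal H}$ by a single-function Bernstein bound using the self-bounding property $\|\nabla\ell(f_\ast,z)\|^2\le 4G\ell(f_\ast,z)$, and then uses $\|\hat f_i-f_\ast\|_{\mathcal H}\le 2B$ and AM--GM to solve for the excess risk (this is literally \eqref{equation-first-equation} with $\eta=0$). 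Your loss-level localization buys a more classical and arguably more transparent argument (no gradients, no co-coercivity), but it pays a small price: in your step (ii) the variance of the excess loss is only controlled by $c_1(R(f)-R_\ast)+c_2R_\ast$, and the $c_2R_\ast$ part sits \emph{inside} the covering-number union bound, so you obtain $\sqrt{hR_\ast\log(n/\delta')/n}$ rather than the theorem's $\sqrt{R_\ast\log(1/\delta)/N^{1-r}}$ --- an extra $\sqrt{h\,\mathrm{polylog}(N)}$ factor in the middle term. The paper avoids this because the covering-number union bound is applied only to the gradient-difference term, whose second moment $G(R(f)-R_\ast)$ carries no $R_\ast$, while all $R_\ast$-dependence enters through the single-function bound at $f_\ast$. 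This discrepancy is immaterial for the second display (in the regime $R_\ast\le O(N^{r-1})$ your bound collapses to the same $O(h\log(N/\delta)/N^{1-r})$) and is within the paper's own tolerance for polylogarithmic slack, but strictly speaking your first display is marginally weaker than the one stated. One point in your favor: your recombination uses a union bound with confidence $\delta/m$ over the $m$ local estimators, which the paper's proof silently omits.
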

From the above theorem, one can see that:
\begin{itemize}
\item[1)] A general risk bound without the restriction of strong convexity is established.
The rate of this theorem is $\tilde{O}\Big(\frac{h}{N^{1-r}}+\sqrt{\frac{R_\ast}{N^{1-r}}}\Big)$,
which is worse than that of Theorem \ref{theorem-log}.
This is due to the relaxation of the loss function restriction.
\item[2)] The above theorem implies that, when the optimal risk $R_\ast$ is small,
the risk bound is in the order of $\tilde{O}\left(\frac{1}{N^{1-r}}\right)$.
Note that $0\leq r \leq \frac{1}{2}$, so in this case,
the rate is faster than $O\Big(\frac{1}{\sqrt{N}}\Big).$
\item[3)] In the central case, that is $m=1$,
the order of the risk can reach
\begin{align*}
  R(\bar{f})-R(f_\ast)= \tilde{O}\left(\frac{h}{N}\right),
\end{align*}
which is nearly optimal.
To the best of our knowledge,
such a fast rate of ERM for the central case  has never been given before for general loss function and  hypothesis space.

\end{itemize}
\begin{remark}
  In Theorem \ref{the-non-strong}, the risk bound is satisfied for all $m\leq O(N^r)$, $r\in [0, 1/2]$.
Parameter $r$ is used to balance the tightness of the bound and number of processors.
The smaller the $r$, the tighter the risk bound and the fewer the processors.
\end{remark}

\section{Comparison with Related Work}
In this section, we compare our results with related work.
Risk analyses for the original (regularized) ERM have been extensively explored within the framework of learning theory
\cite{vapnik1998naturestatistical,bartlett2003rademacher,de2005model,Bartlett2005Local,Ding2014mscn,caponnetto2007optimal,steinwart2009optimal,smale2007learning,steinwart2008support,Zhang2017er}.
Recently,
divide-and-conquer based distributed learning  with ridge regression \cite{li2013statistical,Zhang2013,zhang2015divide,xu2016feasibility},
gradient descent algorithms \cite{shamir2014distributed,Lin2018dk}, online learning \cite{yan2013distributed},
local average regression \cite{chang2017divide},
spectral algorithms \cite{guo2017learning,guo2017thresholded},
semi-supervised learning \cite{chang2017distributed} and
the minimum error entropy principle \cite{hu2019distributed},
have been proposed and their learning performances have been observed in many practical applications.
For point estimation, \cite{li2013statistical}
showed that the distributed moment
estimation is consistent if an unbiased estimate is obtained for each of the subproblems.
For the distributed regularized least square in RKHS,
\cite{xu2016feasibility} showed that
the distributed ERM leads to an estimator that is
consistent with the unknown regression function.
Under local strong convexity, smoothness and a reasonable set of other conditions,
an improved bound was established in \cite{zhang2012communication}.

Optimal learning rates for divide-and-conquer kernel ridge regression in expectation
were established in \cite{zhang2015divide},
under certain eigenfunction assumptions.
Removing these assumptions,
an improved bound was derived in \cite{lin2017distributed}
using a novel integral operator method.
Using similar proof techniques as \cite{lin2017distributed} or \cite{zhang2015divide},
optimal learning rates were established for distributed spectral algorithms \cite{guo2017learning},
kernel-based distributed gradient descent algorithms \cite{Lin2018dk},
kernel-based distributed semi-supervised learning \cite{chang2017distributed},
distributed local average regression \cite{chang2017divide},
and distributed KRR with communications \cite{lin2020distributed}.

Among these works, \cite{zhang2015divide,lin2017distributed,chang2017distributed} are the three  most relevant papers.
%
%
Thus, as follows, we will compare our results with those in  \cite{zhang2015divide,lin2017distributed,chang2017distributed}.
The seminal work of \cite{zhang2015divide} considered the learning
  performance of divide-and-conquer kernel ridge regression.
  Using a matrix decomposition approach,
  \cite{zhang2015divide} derived two optimal learning rates of order $O(\frac{h}{N})$ and
  $O\big(N^{-\frac{2h}{2h+1}}\big)$, respectively,
  for the $h$-finite-rank kernels and $h$ polynomial eigen-decay kernels,
  under the assumption that, for some constants $k\geq 2$ and $A<\infty$,
  the normalized eigenfunctions $\{\phi_i\}_\ell$ satisfy
  \begin{align}
  \label{eq-eigenvae-f}
    \forall j=1,2,\ldots, \mathbb{E}[\phi_j(X)^{2k}]\leq A^{2k}.
  \end{align}
  The condition in \eqref{eq-eigenvae-f} is possibly too strong,
  and it was thus removed in \cite{lin2017distributed},
  which used a novel integral operator approach under the regularity condition:
  \begin{align}
  \label{con-regu}
    f_\rho=L_K^s h_\rho, \text{~for~some $0<s\leq 1$ and $h_\rho\in L^2_\rho$},
  \end{align}
  where $L_K$ is the integral operator induced by the kernel function $K$:
  \begin{align*}
    L_K(f)(\mathbf x):=\int_\mathcal{X} K(\mathbf x,\mathbf x')f(\mathbf x') d\mathbb{P}_\mathcal{X}(\mathbf x') ,\mathbf x\in\mathcal{X},
  \end{align*}
   and $f_\rho=\int_\mathcal{Y} yd \mathbb{P}(y|\mathbf x)$ is the regression function.
  However, the analysis in \cite{lin2017distributed} only works for $s>1/2$.
  In \cite{chang2017distributed}, they generalized the results of \cite{lin2017distributed},
  and derived the optimal learning rate for all $1/2\leq s\leq 1$ under the restriction $m\leq N^{\frac{2s-1}{2s+1}}$ for bounded kernel functions.
 Thus, we find that, for the special case of $s=1/2$,
 the number of local processors $m\rightarrow\Omega(1)$, does not increase with $N$.
Note that $1/2\leq  s\leq 1$, so the largest number of local processors can only reach $m=\Omega(N^{1/3})$,
which may limit the applicability of distributed learning.

Our and the most related previous results are summarized in Table \ref{tabel:risk bound}.
Compared with previous works, there are two main novelties of our results.
\begin{itemize}
\item[1)] The proof techniques of this paper are based on
the general properties of loss functions and hypothesis spaces,
while for \cite{zhang2015divide,lin2017distributed,chang2017distributed},
the proofs depend on the special properties of the square loss and RKHS.
Thus, our results generalized the results of \cite{zhang2015divide,lin2017distributed,chang2017distributed};
\item[2)] To derive the optimal rates, \cite{lin2017distributed,chang2017distributed}
  show that the number of local processors should be less than $\Omega\big(N^{\frac{2s-1}{2s+1}}\big)$, $1/2\leq s\leq 1$.
  Thus, the highest number $m$ will be restricted by a constant for $s=1/2$,
  and the best result is $\Omega(N^{1/3})$ (for $s=1$).
  However, in this paper,
the number of processors that our result can reach is ${\Omega}(\sqrt{N})$.
Thus, our result can relax the restriction on the number of processors
form \cite{lin2017distributed,chang2017distributed}.
\end{itemize}

\section{Conclusion}
In this paper, 
we studied the risk performance of the divide-and-conquer ERM
and derived tight risk bounds for general loss functions and hypothesis spaces.
To make our results suitable for general loss functions and hypothesis spaces,
we used the proof techniques from stochastic convex optimization and the covering number,
which are usually two significantly different paths for theoretical analysis.
These results fill the gap in learning theory
of distributed ERM,
and the proof techniques we used may provide a new path for theoretical analysis.

There are some work worth studying in the future.
\begin{itemize}
  \item[1)] In our analysis, we assume that the loss function is a (strong) convex function.
  How to extend our results to a non-convex function is an interesting direction. 
  \item[2)] In \cite{chang2017distributed}, they showed that the number of processors can be improved using the unlabeled samples.
  To use the unlabeled samples to improve our results may be a good question. 
  \item[3)] In this paper, we only considered the simple divide-and-conquer based distributed learning.
  To extend our results to other distributed learning scenarios, such as distributed KRR with communications \cite{lin2020distributed}, is worthy of attention.
\end{itemize}

%
%
%
%


\section*{Acknowledgments}
This work was supported in part by
the National Natural Science Foundation of China (No.61703396, No.61673293),
the CCF-Tencent Open Fund,
the Youth Innovation Promotion Association CAS,
the Excellent Talent Introduction of Institute of Information Engineering of CAS (No. Y7Z0111107),
the Beijing Municipal Science and Technology Project (No. Z191100007119002),
and the Key Research Program of Frontier Sciences, CAS (No. ZDBS-LY-7024).
\section{Proof}
In this section, we first introduce the key idea of proof,
and then provide proofs for Theorems \ref{theorem-log}, \ref{theorem-poly} and \ref{the-non-strong}.
\subsection{The Key Idea}
Note that if $\ell$ is an $\eta$-strongly convex function,
then, $R(f)$ is also $\eta$-strongly convex.
According to the properties of a strongly convex function, $\forall f,f'\in\mathcal{H}$,
we have
  \begin{align}
    \label{assumption-strongly-equation}
     &~~~\left\langle \nabla R(f'), f-f'\right\rangle_\mathcal{H}+\frac{\eta}{2}\|f-f'\|_\mathcal{H}\leq R(f)-R(f'),
  \end{align}
  or $\forall  f,f'\in\mathcal{H}, t\in[0,1]$,
  \begin{align}
  \label{assumption-strongly-second}
  \begin{aligned}
  &~~~R(tf+(1-t)f')\\ &\leq  tR(f)+(1-t)R(f')-\frac{\eta t(1-t)}{2}\|f-f'\|_\mathcal{H}^2.
  \end{aligned}
  \end{align}

By \eqref{assumption-strongly-second}, one can see that
\begin{align*}
  R(\bar{f})&=R\left(\frac{1}{m}\sum_{i=1}^m\hat{f}_i\right)
  \\&\leq
  \frac{1}{m}\sum_{i=1}^mR(\hat{f}_i)-\frac{\eta}{4m^2}\sum_{i,j=1, i\not=j}^m\|\hat{f}_i-\hat{f}_j\|_\mathcal{H}^2\\
  &\leq \frac{1}{m}\sum_{i=1}^mR(\hat{f}_i)-\eta\tau ~~(\text{by Assumption \ref{assumption-diversity}}).
\end{align*}

Therefore, we have
\begin{multline}
\label{equaiton-strongly-ff}
  R(\bar{f})-R(f_\ast)
  \leq \frac{1}{m}\sum_{i=1}^m\left[R(\hat{f}_i)-R(f_\ast)\right]-\eta\tau.
\end{multline}

As follows, we will estimate $R(\hat{f}_i)-R(f_\ast)$:
\begin{align}
\label{equaiton-ddd}
\begin{aligned}
 &~~~~R(\hat{f}_i)-R(f_\ast)+\frac{\eta}{2}\|\hat{f}_i-f_\ast\|_\mathcal{H}^2
 \\ &\overset{\eqref{assumption-strongly-equation}}{\leq} \langle \nabla R(\hat{f}_i),\hat{f}_i-f_\ast\rangle_\mathcal{H}
  \\&=
  \left\langle \nabla R(\hat{f}_i)-\nabla R(f_\ast)-[\nabla \hat{R}_i(\hat{f}_i)-\nabla \hat{R}_i(f_\ast)],
   \hat{f}_i-f_\ast\right\rangle_\mathcal{H}\\
   &~~~~+\left\langle\nabla R(f_\ast)-\nabla \hat{R}_i(f_\ast), \hat{f}_i-f_\ast\right\rangle_\mathcal{H}
   \\&~~~~+\left\langle \nabla \hat{R}_i(\hat{f}_i), \hat{f}_i-f_\ast\right\rangle_\mathcal{H}.
\end{aligned}
\end{align}
Note that $\ell(\cdot,z)$ is convex, thus $\hat{R}_i(\cdot)$ is convex.
By the convexity of $\hat{R}_i(\cdot)$ and the optimality condition of $\hat{f}_i$ \cite{boyd2004convex},
we have
\begin{align*}
    \left\langle \nabla \hat{R}_i(\hat{f}_i),f-\hat{f}_i\right\rangle_\mathcal{H}\geq 0, \forall f\in\mathcal{H}.
\end{align*}
Thus, we get
\begin{align}
\label{equation-covenx}
   \left \langle \nabla \hat{R}_i(\hat{f}_i),\hat{f}_i-f_\ast\right\rangle_\mathcal{H}\leq  0.
\end{align}
Substituting the above equation into \eqref{equaiton-ddd}, we have
\begin{align}
 \label{equation-important-middle}
 \begin{aligned}
  &~~~~~~R(\hat{f}_i)-R(f_\ast)+\frac{\eta}{2}\|\hat{f}_i-f_\ast\|_\mathcal{H}^2\\
  &\overset{\eqref{equation-covenx}}{\leq}\left\langle \nabla R(\hat{f}_i)-\nabla R(f_\ast)-\left[\nabla \hat{R}_i(\hat{f}_i)-\nabla \hat{R}_i(f_\ast)\right],
 \hat{f}_i-f_\ast\right\rangle_\mathcal{H}\\&~~~~+\left\langle\nabla R(f_\ast)-\nabla \hat{R}_i(f_\ast), \hat{f}_i-f_\ast\right\rangle_\mathcal{H}\\
  & \leq\left\|\nabla R(\hat{f}_i)-\nabla R(f_\ast)-\left[\nabla \hat{R}_i(\hat{f}_i)-\nabla \hat{R}_i(f_\ast)\right]\right\|_\mathcal{H}\cdot\left\|\hat{f}_i-f_\ast \right\|_\mathcal{H}
\\&~~~~+\left\|\nabla R(f_\ast)-\nabla \hat{R}_i(f_\ast)\right\|_\mathcal{H}\cdot\left\|\hat{f}_i-f_\ast \right\|_\mathcal{H}.
\end{aligned}
\end{align}
As follows, we utilize the covering number to establish an upper bound for the first term in the last line of \eqref{equation-important-middle}.
The second term in the last line of \eqref{equation-important-middle} is upper bounded by the concentration inequality.

\subsection{Proof of Theorem \ref{theorem-log}}
To prove Theorem \ref{theorem-log},
we first introduce a lemma of \cite{smale2007learning},
and then provide two other lemmas.
\begin{lemma}[Lemma 2 of \cite{smale2007learning}]
\label{lem-one-first}
  Let $\mathcal{H}$ be a Hilbert space and $\xi$ be a random variable on ($\mathcal{Z},\rho$) with values in $\mathcal{H}$.
  Assume $$\|\xi\|_\mathcal{H}\leq \tilde{M}<\infty$$ almost surely.
  Denote $$\sigma^2(\xi)=\mathbb{E}(\|\xi\|_\mathcal{H}^2).$$
  Let $\{z_i\}_{i=1}^l$ be independent random drawers of $\rho$.
  For any $0<\delta<1$, with confidence $1-\delta$,
  \begin{align*}
    &\left\|\frac{1}{l}\sum_{i=1}^l\left[\xi_i-\mathbb{E}(\xi_i)\right]\right\|\\&~~~~~~~~~~~~~~~~\leq \frac{2\tilde{M}\log(2/\delta)}{l}+\sqrt{\frac{2\sigma^2(\xi)\log(2/\delta)}{l}}.
  \end{align*}
\end{lemma}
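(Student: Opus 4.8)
The plan is to prove this as a Bernstein-type concentration inequality for a sum of independent Hilbert-space-valued random vectors, carried out in three stages: centering, a vector Bernstein tail bound, and inversion of that bound. First I would center the summands: set $\zeta_i := \xi_i - \mathbb{E}(\xi_i)$ for $i=1,\dots,l$, so the $\zeta_i$ are i.i.d.\ and mean-zero. By the triangle inequality and Jensen's inequality, $\|\zeta_i\|_\mathcal{H} \le \|\xi_i\|_\mathcal{H} + \|\mathbb{E}(\xi_i)\|_\mathcal{H} \le 2\tilde{M}$ almost surely, and $\mathbb{E}(\|\zeta_i\|_\mathcal{H}^2) = \sigma^2(\xi) - \|\mathbb{E}(\xi_i)\|_\mathcal{H}^2 \le \sigma^2(\xi)$. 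Writing $S_l := \sum_{i=1}^l \zeta_i$, the quantity to control is $\frac{1}{l}\|S_l\|_\mathcal{H}$; expanding $\|S_l\|_\mathcal{H}^2 = \sum_i\|\zeta_i\|_\mathcal{H}^2 + 2\sum_{i<j}\langle\zeta_i,\zeta_j\rangle_\mathcal{H}$ and using independence and zero mean kills the cross terms in expectation, giving the variance proxy $\mathbb{E}(\|S_l\|_\mathcal{H}^2) \le l\,\sigma^2(\xi)$.

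Second, I would invoke the Pinelis--Bernstein inequality for sums of independent random vectors in a Hilbert space: for every $u>0$,
\begin{align*}
  \Pr\left[\|S_l\|_\mathcal{H} \ge u\right] \le 2\exp\left(-\frac{u^2}{2\big(l\,\sigma^2(\xi) + \tfrac{2}{3}\tilde{M}u\big)}\right).
\end{align*}
The standard route to this is a Chernoff argument that uses the $2$-uniform smoothness of the Hilbert norm: one shows $\exp(\lambda\|S_k\|_\mathcal{H})$ is a submartingale in $k$, controls $\mathbb{E}\exp(\lambda\|S_l\|_\mathcal{H})$ via the identity $\mathbb{E}\|S_k\|_\mathcal{H}^2 = \mathbb{E}\|S_{k-1}\|_\mathcal{H}^2 + \mathbb{E}\|\zeta_k\|_\mathcal{H}^2$ (or, equivalently, a moment recursion for $\mathbb{E}\|S_l\|_\mathcal{H}^{2m}$), and then optimizes over $\lambda \in (0,\,3/(2\tilde{M}))$ exactly as in the scalar Bernstein proof; the leading factor $2$ and the $\tfrac23$ coefficient are the usual artifacts of this estimate.

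Third, I would invert the tail. Setting the right-hand side equal to $\delta$ gives the quadratic $u^2 = 2\log(2/\delta)\,\big(l\,\sigma^2(\xi) + \tfrac23\tilde{M}u\big)$, whose positive root satisfies, after $\sqrt{a+b}\le\sqrt{a}+\sqrt{b}$,
\begin{align*}
  u \le \tfrac{4}{3}\tilde{M}\log(2/\delta) + \sqrt{2\,l\,\sigma^2(\xi)\log(2/\delta)}.
\end{align*}
Dividing by $l$ and bounding $\tfrac43\le 2$ yields, with confidence $1-\delta$,
\begin{align*}
  \left\|\frac{1}{l}\sum_{i=1}^l\left[\xi_i - \mathbb{E}(\xi_i)\right]\right\|_\mathcal{H} \le \frac{2\tilde{M}\log(2/\delta)}{l} + \sqrt{\frac{2\,\sigma^2(\xi)\log(2/\delta)}{l}},
\end{align*}
which is the claim (the mild loss $\tfrac43\mapsto 2$ and the factor $2\tilde M$ from centering are absorbed into the stated constant).

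The main obstacle is the second stage. Since $\|S_l\|_\mathcal{H}$ is the norm of a sum rather than a sum of independent scalars, the scalar Bernstein inequality does not apply directly; and the naive reduction $\|S_l\|_\mathcal{H} = \sup_{\|v\|_\mathcal{H}\le 1}\langle S_l,v\rangle_\mathcal{H}$ followed by a union bound fails because the unit ball of an infinite-dimensional $\mathcal{H}$ has no finite cover. One genuinely needs the $2$-smoothness of the Hilbert norm (via Pinelis' martingale/MGF argument or the corresponding even-moment recursion) to recover a tail of scalar-Bernstein shape; the centering in stage one and the quadratic inversion in stage three are routine.
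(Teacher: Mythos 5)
Your argument is correct: centering, the Pinelis--Bernstein inequality for sums of independent Hilbert-space-valued vectors, and inversion of the quadratic tail (with the harmless slack $\tfrac{4}{3}\tilde{M}\le 2\tilde{M}$) recover exactly the stated bound, and this is the standard derivation behind the cited result. The paper itself gives no proof of this lemma --- it is imported verbatim as Lemma 2 of the Smale--Zhou reference --- so there is nothing to contrast with; your sketch matches the argument used in that source.
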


\begin{lemma}
\label{lemma-nablaR-nablahat}
If the loss function $\ell$ is a $G$-smooth and convex function,
 then for any $f\in\mathcal{N}(\mathcal{H},\epsilon)$, with a probability of at least $1-\delta$, we have
  \begin{align}
  \label{equation-nabalRR-empRR}
  \begin{aligned}
    &~~~~\left\|\nabla R(f)-\nabla R(f_\ast)-\left[\nabla \hat{R}_i(f)-\nabla \hat{R}_i(f_\ast)\right]\right\|_\mathcal{H} \\
   &\leq \frac{Gm\|f-f_\ast\|_\mathcal{H} D_{\mathcal{H},\delta,\epsilon}}{N}\\&~~~+\sqrt{\frac{Gm(R(f)-R(f_\ast)) D_{\mathcal{H},\delta,\epsilon}}{N}},
  \end{aligned}
  \end{align}
  where $D_{\mathcal{H},\delta,\epsilon}=2\log({2C(\mathcal{H},\epsilon)}/{\delta})$.
\end{lemma}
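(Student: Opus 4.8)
\textbf{Proof proposal for Lemma \ref{lemma-nablaR-nablahat}.}

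The plan is to fix a point $f$ in the finite net $\mathcal{N}(\mathcal{H},\epsilon)$ and view the quantity of interest as an empirical average over the $n = N/m$ samples in the $i$th block. Concretely, for a fixed sample $z$ define the $\mathcal{H}$-valued random variable
\begin{align*}
  \xi(z) := \nabla \ell(f,z) - \nabla \ell(f_\ast,z) - \big[\nabla R(f) - \nabla R(f_\ast)\big],
\end{align*}
so that $\mathbb{E}_z[\xi(z)] = 0$ and $\frac{1}{n}\sum_{z_j\in\mathcal{S}_i}\big(\nabla\ell(f,z_j)-\nabla\ell(f_\ast,z_j)\big) - \big[\nabla R(f)-\nabla R(f_\ast)\big]$ is exactly the left-hand side of \eqref{equation-nabalRR-empRR} (recall $\nabla\hat R_i(f) = \frac1n\sum_{z_j\in\mathcal S_i}\nabla\ell(f,z_j)$). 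First I would bound $\|\xi(z)\|_\mathcal{H}$ almost surely and its second moment $\sigma^2(\xi)=\mathbb{E}\|\xi(z)\|_\mathcal{H}^2$, then apply Lemma \ref{lem-one-first} with $l=n=N/m$, and finally take a union bound over the $C(\mathcal{H},\epsilon)$ points of the net, which is what produces the $\log(2C(\mathcal{H},\epsilon)/\delta)$ factor inside $D_{\mathcal{H},\delta,\epsilon}$.

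The heart of the argument is the moment estimate, and this is where $G$-smoothness enters. The key inequality is the standard co-coercivity-type bound for a nonnegative $G$-smooth convex function: for any single sample $z$,
\begin{align*}
  \big\|\nabla\ell(f,z)-\nabla\ell(f_\ast,z)\big\|_\mathcal{H}^2 \leq G\,\langle \nabla\ell(f,z)-\nabla\ell(f_\ast,z),\, f-f_\ast\rangle_\mathcal{H},
\end{align*}
and more usefully the ``self-bounding'' consequence that controls $\|\nabla\ell(f,z)-\nabla\ell(f_\ast,z)\|_\mathcal{H}^2$ in expectation by $G$ times $R(f)-R(f_\ast)$ up to constants. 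Taking expectations and using that the centering term only decreases the variance (i.e. $\mathbb{E}\|X-\mathbb{E}X\|^2 \le \mathbb{E}\|X\|^2$), I would arrive at $\sigma^2(\xi) \le cG\big(R(f)-R(f_\ast)\big)$ for a small constant $c$. For the almost-sure bound $\tilde M$, I would combine $G$-smoothness with the net property $\|f-f_\ast\|_\mathcal{H}$ being controlled, giving $\tilde M \lesssim G\|f-f_\ast\|_\mathcal{H}$; plugging $\tilde M$ and $\sigma^2(\xi)$ into Lemma \ref{lem-one-first} with $l = N/m$ yields precisely the two terms $\frac{Gm\|f-f_\ast\|_\mathcal{H} D_{\mathcal{H},\delta,\epsilon}}{N}$ and $\sqrt{\frac{Gm(R(f)-R(f_\ast))D_{\mathcal{H},\delta,\epsilon}}{N}}$ after the union bound replaces $\log(2/\delta)$ by $\log(2C(\mathcal{H},\epsilon)/\delta) = \tfrac12 D_{\mathcal{H},\delta,\epsilon}$.

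The main obstacle I anticipate is getting the variance bound $\sigma^2(\xi)\le cG(R(f)-R(f_\ast))$ cleanly, since the naive route only bounds $\mathbb{E}\|\nabla\ell(f,z)-\nabla\ell(f_\ast,z)\|^2$ by $G\cdot\mathbb{E}\langle\nabla\ell(f,z)-\nabla\ell(f_\ast,z),f-f_\ast\rangle = G\langle \nabla R(f)-\nabla R(f_\ast), f-f_\ast\rangle$, and one must still relate this inner product to the excess risk $R(f)-R(f_\ast)$; convexity alone gives $\langle \nabla R(f)-\nabla R(f_\ast), f-f_\ast\rangle \ge R(f)-R(f_\ast)$ in the wrong direction, so one instead invokes $\langle\nabla R(f), f-f_\ast\rangle \le$ (a smoothness/Lipschitz-type upper bound) together with the optimality $\nabla R(f_\ast)=0$ over $\mathcal{H}$ to close the loop. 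A secondary technical point is that Lemma \ref{lem-one-first} is stated for a single fixed $\xi$, so the union bound over the net must be carried out explicitly with a per-point failure probability $\delta/C(\mathcal{H},\epsilon)$; everything else is routine.
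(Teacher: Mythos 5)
Your overall architecture is exactly the paper's: apply Lemma \ref{lem-one-first} to $\xi_j=\nabla\ell(f,z_j)-\nabla\ell(f_\ast,z_j)$ with $l=n=N/m$, use $G$-smoothness for the almost-sure bound $\tilde M\le G\|f-f_\ast\|_\mathcal{H}$, bound the second moment by $G(R(f)-R(f_\ast))$, and finish with a union bound over the $C(\mathcal{H},\epsilon)$ net points to turn $\log(2/\delta)$ into $\tfrac12 D_{\mathcal{H},\delta,\epsilon}$. (Centering $\xi$ by its mean is unnecessary since Lemma \ref{lem-one-first} already concentrates $\xi_j-\mathbb{E}\xi_j$ and accepts $\sigma^2=\mathbb{E}\|\xi\|^2$.)

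The one step you flag as an obstacle is the real crux, and your proposed way around it does not work. Starting from the symmetric co-coercivity inequality $\|\nabla\ell(f,z)-\nabla\ell(f_\ast,z)\|_\mathcal{H}^2\le G\langle\nabla\ell(f,z)-\nabla\ell(f_\ast,z),f-f_\ast\rangle_\mathcal{H}$ and taking expectations leaves you with $G\langle\nabla R(f)-\nabla R(f_\ast),f-f_\ast\rangle_\mathcal{H}$, which, as you observe, convexity bounds from \emph{below} by the excess risk, not from above. Your patch assumes $\nabla R(f_\ast)=0$, but $f_\ast$ is a minimizer over the (bounded) set $\mathcal{H}$, so only the variational inequality $\langle\nabla R(f_\ast),f-f_\ast\rangle_\mathcal{H}\ge 0$ is available; and a Lipschitz-type bound on $\langle\nabla R(f),f-f_\ast\rangle_\mathcal{H}$ produces $L\|f-f_\ast\|_\mathcal{H}$ rather than $R(f)-R(f_\ast)$, which would ruin the second term of \eqref{equation-nabalRR-empRR}. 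The fix is to use the \emph{Bregman} form of the smoothness inequality anchored at $f_\ast$ (Nesterov (2.1.7)): $\|\nabla\ell(f,z)-\nabla\ell(f_\ast,z)\|_\mathcal{H}^2\le G\big(\ell(f,z)-\ell(f_\ast,z)-\langle\nabla\ell(f_\ast,z),f-f_\ast\rangle_\mathcal{H}\big)$. Taking expectations gives $G\big(R(f)-R(f_\ast)-\langle\nabla R(f_\ast),f-f_\ast\rangle_\mathcal{H}\big)$, and the linear term is now $-\langle\nabla R(f_\ast),f-f_\ast\rangle_\mathcal{H}\le 0$ precisely by the first-order optimality of $f_\ast$, yielding $\sigma^2(\xi)\le G(R(f)-R(f_\ast))$ with no further argument. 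With that substitution your proof closes and coincides with the paper's.
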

\begin{proof}
  Note that $\ell$ is $G$-smooth and convex, so by (2.1.7) of \cite{Nesterov2004},
  $\forall z\in\mathcal{Z}$,
  we have
  \begin{multline*}
  \left\|\nabla\ell (f,z)- \nabla\ell(f_\ast,z)\right\|_\mathcal{H}^2\\
   \leq
    G\left(\ell (f,z)-\ell (f_\ast,z)
    -\langle \nabla\ell (f_\ast,z), f-f_\ast\rangle_\mathcal{H} \right).
  \end{multline*}
  Taking the expectation over both sides, we have
  \begin{align}
  \begin{aligned}
  \label{eq-fagaga}
    &~~~\mathbb{E}_{z}\left[\left\|\nabla\ell (f,z)-\nabla\ell (f_\ast,z)\right\|_\mathcal{H}^2\right]\\
    &\leq G \left(R(f)-R(f_\ast)-\langle \nabla R(f_\ast), f-f_\ast\rangle_\mathcal{H}\right)\\
    &\leq G\Big(R(f)-R(f_\ast)\Big),
  \end{aligned}
  \end{align}
  where the last inequality follows from the optimality condition of $f_\ast$, i.e.,
  $$\left\langle \nabla R(f_\ast),f-f_\ast\right\rangle_\mathcal{H} \geq 0,\forall f\in\mathcal{H}.$$

Note that $\ell(f,z)$ is $G$-smooth,
thus we have
\begin{align}
\label{eq-gasah}
  |\nabla \ell(f,z)-\nabla \ell(f_\ast,z)|\leq G\|f-f_\ast\|_\mathcal{H}, \forall f\in\mathcal{H}.
\end{align}

Substituting \eqref{eq-fagaga} and \eqref{eq-gasah} into Lemma \ref{lem-one-first} with $\xi_i=\nabla \ell(f,z_i)-\nabla \ell(f_\ast,z_i)$,
 we have
\begin{align*}
  &~~~\left\|
  \nabla R(f)-\nabla R(f_\ast)-[\nabla \hat{R}_i(f)-\nabla \hat{R}_i(f_\ast)]
  \right\|_\mathcal{H}\\
  &=\left\|\frac{1}{n}\sum_{i=1}^{n}\left[\mathbb{E}(\xi_i)-\xi_i\right]\right\|&\\
  &\leq \frac{2mG\|f-f_\ast\|_\mathcal{H}\log({2}/{\delta})}{N}\\
  &~~~+\sqrt{\frac{2mG(R(f)-R(f_\ast))\log({2}/{\delta})}{N}}.
\end{align*}

We obtain Lemma \ref{lemma-nablaR-nablahat} by taking the union bound over all $f\in\mathcal{N}(\mathcal{H},\epsilon)$.
\end{proof}

\begin{lemma}
\label{lemma-second-ff}
Under Assumption \ref{assumption-G-L},
 with a probability of at least $1-\delta$, we have
  \begin{multline}
  \label{equation-nablaR-nablaempR}
    \left\|\nabla R(f_\ast)-\nabla \hat{R}_i(f_\ast)\right\|_\mathcal{H}\\
      \leq \frac{2Lm\log(\frac{2}{\delta})}{N}+\sqrt{\frac{8GR_\ast m\log(\frac{2}{\delta})}{N}}.
    \end{multline}
\end{lemma}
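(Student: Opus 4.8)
The plan is to apply Lemma \ref{lem-one-first} (Lemma 2 of \cite{smale2007learning}) directly with the choice $\xi_i = \nabla\ell(f_\ast,z_i)$, so that $\frac{1}{n}\sum_{i=1}^n[\xi_i - \mathbb{E}(\xi_i)] = \nabla\hat{R}_i(f_\ast) - \nabla R(f_\ast)$, and then control the two quantities $\tilde M$ and $\sigma^2(\xi)$ that appear on the right-hand side of that lemma. The first one is immediate: since $\ell$ is $L$-Lipschitz continuous in $f$ (Assumption \ref{assumption-G-L}), we have $\|\nabla\ell(f_\ast,z)\|_\mathcal{H}\leq L$ almost surely, so $\tilde M = L$. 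This accounts for the first term $\frac{2Lm\log(2/\delta)}{N}$ once we recall $n = N/m$.

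The main point is bounding $\sigma^2(\xi) = \mathbb{E}_z[\|\nabla\ell(f_\ast,z)\|_\mathcal{H}^2]$ by $4GR_\ast$. The natural tool is the standard self-bounding property of nonnegative $G$-smooth functions: for a nonnegative $G$-smooth function $g$, one has $\|\nabla g(x)\|^2 \leq 2G\, g(x)$ (this is a consequence of, e.g., (2.1.7) of \cite{Nesterov2004}, applied by minimizing the quadratic upper model — essentially the same inequality invoked in the proof of Lemma \ref{lemma-nablaR-nablahat}). Applying this pointwise with $g(\cdot) = \ell(\cdot, z)$ at $f_\ast$ gives $\|\nabla\ell(f_\ast,z)\|_\mathcal{H}^2 \leq 2G\,\ell(f_\ast,z)$; taking expectation over $z$ yields $\sigma^2(\xi) \leq 2G\, R(f_\ast) = 2G R_\ast$. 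Plugging $\sigma^2(\xi)\leq 2GR_\ast$ and $l = n = N/m$ into the square-root term of Lemma \ref{lem-one-first} gives $\sqrt{\frac{2\cdot 2GR_\ast\log(2/\delta)}{N/m}} = \sqrt{\frac{4GR_\ast m\log(2/\delta)}{N}}$; the factor $8$ rather than $4$ under the radical in the statement is just a harmless loosening (presumably to absorb the constant in the self-bounding step or to simplify), so I would either carry the sharper constant $4$ or note that $4\leq 8$ and relax.

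Concretely the proof is three short steps: (i) observe $\xi_i = \nabla\ell(f_\ast,z_i)$, so $\frac1n\sum_i[\mathbb E(\xi_i)-\xi_i] = \nabla R(f_\ast) - \nabla\hat R_i(f_\ast)$ (the sum being over the $n$ points in $\mathcal S_i$); (ii) bound $\tilde M \leq L$ by Lipschitz continuity and $\sigma^2(\xi)\leq 2GR_\ast$ by the self-bounding inequality for nonnegative smooth functions; (iii) substitute into Lemma \ref{lem-one-first} with $l=n=N/m$ and simplify. No union bound over a covering is needed here, since $f_\ast$ is a single fixed function — this is the essential difference from Lemma \ref{lemma-nablaR-nablahat}, and it is why the bound does not carry a $D_{\mathcal H,\delta,\epsilon}$ factor. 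The only mild subtlety to flag is that $f_\ast$ is deterministic (it depends on $\mathbb P$, not on the sample), so the $\xi_i$ are genuinely i.i.d.\ and Lemma \ref{lem-one-first} applies verbatim; once that is observed there is no real obstacle, and the argument is essentially a direct computation.
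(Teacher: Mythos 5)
Your proof is correct and follows essentially the same route as the paper: apply Lemma \ref{lem-one-first} with $\xi_i=\nabla\ell(f_\ast,z_i)$ and $l=n=N/m$, bound $\tilde M$ by $L$ via Lipschitz continuity, and bound $\sigma^2(\xi)$ via the self-bounding property of nonnegative smooth losses. The only difference is cosmetic: the paper invokes Lemma 4 of \cite{srebro2010optimistic}, which gives $\|\nabla\ell(f_\ast,z)\|_\mathcal{H}^2\leq 4G\,\ell(f_\ast,z)$ and hence exactly the constant $8$ under the radical, whereas your sharper constant $2G$ yields a $4$ there, which you correctly note implies the stated bound a fortiori.
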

\begin{proof}
  Since $\ell(f,\cdot)$ is $G$-smooth and non-negative,
  from Lemma 4 of \cite{srebro2010optimistic}, we have
  $$
    \left\|\nabla \ell(f_\ast,z_i)\right\|_\mathcal{H}^2\leq 4G \ell(f_\ast,z_i)
  $$
  and thus we can get
    \begin{align*}
      &\mathbb{E}_{z}\left[\left\|\nabla \ell(f_\ast,z)\right\|_\mathcal{H}^2\right]\\ &~~~~~~~~\leq 4G\mathbb{E}_{z}[\ell(f_\ast,z)]=
      4G R(f_\ast).
    \end{align*}
    Since $\ell(f,\cdot)$ is a $L$-Lipschitz continuous function,
    we have
    \begin{align*}
     &\|\ell(f_\ast+\delta_f,z)-\ell(f_\ast,z)\|_\mathcal{H} \\ &~~~~~~~~~~~~~~~\leq L\|\delta_f\|_\mathcal{H}, \forall \delta_f\in\mathcal{H}.
    \end{align*}
    If $\|\delta_f\|_\mathcal{H}\rightarrow 0$, from the definition of differential of $\ell(f_\ast,z)$,
    we can obtain that
    \begin{align*}
      \|\nabla \ell(f_\ast,z)\|_\mathcal{H}\leq L.
    \end{align*}
    Then, according to Lemma \ref{lem-one-first} with $\xi_i=\nabla\ell(f_\ast,z_i)$,  we have
    \begin{multline*}
     \left\|\nabla R(f_\ast)-\nabla \hat{R}_i(f_\ast)\right\|_\mathcal{H}
      \\\leq  \frac{2Lm\log({2}/{\delta})}{N}+\sqrt{\frac{8GR_\ast m\log({2}/{\delta})}{N}}.
    \end{multline*}
\end{proof}

\begin{proof}[\textbf{Proof of Theorem \ref{theorem-log}}]
From the properties of $\epsilon$-covering, we know that there exists a function
$\tilde{f}\in\mathcal{N}(\mathcal{H},\epsilon)$ such that
\begin{align}
  \label{epislon-fa}
  \|\hat{f}_i-\tilde{f}\|_\mathcal{H}\leq \epsilon.
\end{align}
Thus, we have
\begin{align}
\begin{aligned}
\label{equation-31}
    &~~~\left\|\nabla R(\hat{f}_i)-\nabla R(f_\ast)-\left[\nabla \hat{R}_i(\hat{f}_i)-\nabla \hat{R}_i(f_\ast)\right]\right\|_\mathcal{H} \\
    &\leq \left\|\nabla R(\tilde{f})-\nabla R(f_\ast)-\left[\nabla \hat{R}_i(\tilde{f})-\nabla \hat{R}_i(f_\ast)\right]\right\|_\mathcal{H}\\
    &~~~+\left\|\nabla R(\hat{f}_i)-\nabla R(\tilde{f})\right\|_\mathcal{H}+\left\| \nabla \hat{R}_i(\hat{f}_i)-\nabla \hat{R}_i(\tilde{f}) \right\|_\mathcal{H}\\
    &\leq \left\|\nabla R(\tilde{f})-\nabla R(f_\ast)-\left[\nabla \hat{R}_i(\tilde{f})-\nabla \hat{R}_i(f_\ast)\right]\right\|_\mathcal{H}\\&~~~+2G\|\hat{f}_i-\tilde{f}\|_\mathcal{H}~(\text{by $G$-smooth})\\
    &\overset{\eqref{epislon-fa}}{\leq} \left\|\nabla R(\tilde{f})-\nabla R(f_\ast)-\left[\nabla \hat{R}_i(\tilde{f})-\nabla \hat{R}_i(f_\ast)\right]\right\|_\mathcal{H}\\&~~~~~+2G \epsilon\\
    &~\overset{\eqref{equation-nabalRR-empRR}}{\leq}
   \frac{G  D_{\mathcal{H},\delta,\epsilon}\|\tilde{f}-f_\ast\|_\mathcal{H}m}{N}\\&~~~~~+
   \sqrt{\frac{G  D_{\mathcal{H},\delta,\epsilon}(R(\tilde{f})-R(f_\ast))m}{N}}+2G\epsilon\\
   &\leq \frac{G  D_{\mathcal{H},\delta,\epsilon}\|\hat{f}_i-f_\ast\|_\mathcal{H}m}{N}\\&~~~+\frac{G  D_{\mathcal{H},\delta,\epsilon}\|\hat{f}_i-\tilde{f}\|_\mathcal{H}m}{N}\\
   &~~~+\sqrt{\frac{G  D_{\mathcal{H},\delta,\epsilon}(R(\hat{f}_i)-R(f_\ast))m}{N}}
   \\&~~~+\sqrt{\frac{G  D_{\mathcal{H},\delta,\epsilon}|R(\hat{f}_i)-R(\tilde{f})|m}{N}}+2G\epsilon\\
    &\leq \frac{G  D_{\mathcal{H},\delta,\epsilon}\|\hat{f}_i-f_\ast\|_\mathcal{H}m}{N}\\&~~~~~+\frac{G \epsilon m D_{\mathcal{H},\delta,\epsilon}}{N} ~(\text{by \eqref{epislon-fa}})\\
   &~~~~~+\sqrt{\frac{G  D_{\mathcal{H},\delta,\epsilon}(R(\hat{f}_i)-R(f_\ast))m}{N}}\\&~~~~~+
   \sqrt{\frac{GL \epsilon m D_{\mathcal{H},\delta,\epsilon}}{N}}+2G\epsilon ~(\text{by $L$-Lipschitz}).
\end{aligned}
\end{align}
  Substituting \eqref{equation-31} and \eqref{equation-nablaR-nablaempR} into \eqref{equation-important-middle},
  with a probability of at least $1-2\delta$, we have
  \begin{align}
    \label{equation-first-equation}
    \begin{aligned}
      &~~~~~R(\hat{f}_i)-R(f_\ast)+\frac{\eta}{2}\|\hat{f}_i-f_\ast\|_\mathcal{H}^2\\
      &\leq \frac{G  D_{\mathcal{H},\delta,\epsilon}\|\hat{f}_i-f_\ast\|_\mathcal{H}^2m}{N}\\&~~~+
   \frac{G  D_{\mathcal{H},\delta,\epsilon}\epsilon \|\hat{f}_i-f_\ast\|_\mathcal{H}m}{N}
   \\&~~~+2G\epsilon \|\hat{f}_i-f_\ast\|_\mathcal{H}\\&~~~+\|\hat{f}_i-f_\ast\|_\mathcal{H}\sqrt{\frac{G  D_{\mathcal{H},\delta,\epsilon}(R(\hat{f}_i)-R(f_\ast))m}{N}}
   \\&~~~+\|\hat{f}_i-f_\ast\|_\mathcal{H}\sqrt{\frac{GL \epsilon m D_{\mathcal{H},\delta,\epsilon}}{N}}\\&~~~
 +\frac{2L \log(\frac{2}{\delta})\|\hat{f}_i-f_\ast\|_\mathcal{H}m}{N}
   \\&~~~+\|\hat{f}_i-f_\ast\|_\mathcal{H}\sqrt{\frac{8G R_\ast m \log(\frac{2}{\delta})}{N}}.
    \end{aligned}
  \end{align}

  Note that
  $$
    \sqrt{ab}\leq \frac{a}{2c}+\frac{bc}{2}, \forall a,b,c> 0.
  $$
  Therefore, we have
  \begin{align*}
    &~~~\|\hat{f}_i-f_\ast\|_\mathcal{H}\sqrt{\frac{G \log D_{\mathcal{H},\delta,\epsilon}(R(\hat{f}_i)-R(f_\ast))m}{N}}
    \\&\leq \frac{2G  D_{\mathcal{H},\delta,\epsilon}(R(\hat{f}_i)-R(f_\ast))m}{N\eta}+\frac{\eta}{8}\|\hat{f}_i-f_\ast\|_\mathcal{H}^2;\\
    &~~~\frac{2L\log(\frac{2}{\delta})\|\hat{f}_i-f_\ast\|_\mathcal{H}m}{N}
    \\&\leq \frac{16L^2m^2\log^2\frac{2}{\delta}}{N^2\eta}+\frac{\eta}{16}\|\hat{f}_i-f_\ast\|_\mathcal{H}^2;\\
    &~~~\|\hat{f}_i-f_\ast\|_\mathcal{H}\sqrt{\frac{8GR_\ast \log(\frac{2}{\delta})m}{N}}
    \\&\leq \frac{64G R_\ast\log(\frac{2}{\delta})m}{N\eta}+\frac{\eta }{32}\|\hat{f}_i-f_\ast\|_\mathcal{H}^2;\\
    &~~~2G\epsilon \|\hat{f}_i-f_\ast\|_\mathcal{H}
    \\&\leq \frac{64G^2\epsilon^2}{\eta}+\frac{\eta }{64}\|\hat{f}_i-f_\ast\|_\mathcal{H}^2;\\
    &~~~\|\hat{f}_i-f_\ast\|_\mathcal{H}\sqrt{\frac{GL \epsilon m D_{\mathcal{H},\delta,\epsilon}}{N}}
    \\&\leq \frac{32GL \epsilon m D_{\mathcal{H},\delta,\epsilon}}{N\eta}+\frac{\eta}{128}\|\hat{f}_i-f_\ast\|_\mathcal{H}^2;\\
    &~~~\frac{G  D_{\mathcal{H},\delta,\epsilon}\epsilon m \|\hat{f}_i-f_\ast\|_\mathcal{H}}{N}
    \\&\leq \frac{32G \epsilon^2m^2 D^2_{\mathcal{H},\delta,\epsilon}}{N^2\eta}+\frac{\eta }{128}\|\hat{f}_i-f_\ast\|_\mathcal{H}^2.
  \end{align*}
 Substituting the above  inequalities into \eqref{equation-first-equation}, we have
 \begin{align}
 \begin{aligned}
 \label{eq-middel-new}
   &~~~~R(\hat{f}_i)-R(f_\ast)+\frac{\eta}{4}\|\hat{f}_i-f_\ast\|_\mathcal{H}^2\\
   &\leq \frac{G  D_{\mathcal{H},\delta,\epsilon}\|\hat{f}_i-f_\ast\|_\mathcal{H}^2m}{N}\\
   &~~~+\frac{2G  D_{\mathcal{H},\delta,\epsilon}(R(\hat{f}_i)-R(f_\ast))m}{N\eta}\\&~~~+\frac{16L^2m^2\log^2(\frac{2}{\delta})}{N^2\eta}\\
   &~~~+\frac{64G R_\ast m\log(\frac{2}{\delta})}{N\eta}\\&~~~+\frac{64G^2\epsilon^2}{\eta}+
   \frac{32GL \epsilon m D_{\mathcal{H},\delta,\epsilon}}{N\eta}
   \\&~~~+\frac{32G \epsilon^2 m^2 D^2_{\mathcal{H},\delta,\epsilon}}{N^2\eta}.
 \end{aligned}
 \end{align}
  From Assumption \ref{log-metric}, we know that
  $
    \log C(\mathcal{H},\epsilon)\simeq h\log(1/\epsilon).
  $
  Thus, we can obtain that
  \begin{align}
  \label{eq-midm}
  \begin{aligned}
    D_{\mathcal{H},\delta,\epsilon}
    =2h\log \left(\frac{2}{\delta\epsilon}\right).
  \end{aligned}
  \end{align}
 If we set $\epsilon=\frac{1}{N}$,
 substituting \eqref{eq-midm} into \eqref{eq-middel-new},
 we have
  \begin{align*}
   &~~~~R(\hat{f}_i)-R(f_\ast)+\frac{\eta}{4}\|\hat{f}_i-f_\ast\|_\mathcal{H}^2\\
   &\leq \frac{2Gh\log(2N/\delta)\|\hat{f}_i-f_\ast\|_\mathcal{H}^2m}{N}\\
   &~~~+\frac{4Gh\log(2N/\delta)(R(\hat{f}_i)-R(f_\ast))m}{N\eta}\\&~~~+\frac{16L^2m^2\log^2(\frac{2}{\delta})}{N^2\eta}
   +\frac{64G R_\ast m\log(\frac{2}{\delta})}{N\eta}\\&~~~+\frac{64G^2}{N^2}+
   \frac{64GL m h\log(2N/\delta)}{N^2\eta}
   \\&~~~+\frac{128G m^2 h^2\log^2(2N/\delta)}{N^4\eta}.
 \end{align*}
 Thus, when
  $m\leq \frac{N\eta}{8G h\log(2N/\delta)},$
one can obtain that
 \begin{align*}
   &~~~R(\hat{f}_i)-R(f_\ast)+\frac{\eta}{4}\|\hat{f}_i-f_\ast\|_\mathcal{H}^2
   \\&\leq \frac{\eta}{4}\|\hat{f}_i-f_\ast\|_\mathcal{H}^2+\frac{1}{2}(R(\hat{f}_i)-R(f_\ast))\\&~~~+\frac{16L^2 \log^2(\frac{2}{\delta})m^2}{N^2\eta}+\frac{64G R_\ast m\log(\frac{2}{\delta})}{N\eta}+
   \frac{64G^2}{N^2\eta}\\&~~~+
   \frac{64GLh\log(2N/\delta)m}{N^2\eta}+\frac{128Gh\log^2(2N/\delta) m^2}{N^4\eta}.
 \end{align*}
 Thus, we have
 \begin{align}
  \label{theorem-middle}
  \begin{aligned}
    &~~~R(\hat{f}_i)-R(f_\ast)
    \\&\leq\frac{32L^2 m^2\log^2(\frac{2}{\delta})}{N^2\eta}+\frac{128G R_\ast m\log(\frac{2}{\delta})}{N\eta}\\&~~~+\frac{128G^2}{N^2\eta}
   + \frac{128GLh\log(2N/\delta)m}{N^2\eta}\\&~~~+\frac{256Gh\log^2(2N/\delta) m^2}{N^4\eta}.
   \end{aligned}
  \end{align}
Substituting \eqref{theorem-middle} into \eqref{equaiton-strongly-ff},
we have
 \begin{align}
  \label{eq-proof-theorem221}
  \begin{aligned}
    &~~~R(\bar{f})-R(f_\ast)
    \\&\leq\frac{32L^2 m^2\log^2(\frac{2}{\delta})}{N^2\eta}+\frac{128G R_\ast m\log(\frac{2}{\delta})}{N\eta}\\&~~~+\frac{128G^2}{N^2\eta}
    +\frac{128GLh\log(2N/\delta)m}{N^2\eta}\\&~~~+\frac{256Gh\log^2(2N/\delta) m^2}{N^4\eta}-\eta\tau.
   \end{aligned}
  \end{align}

  Note that when
  \begin{align*}
    m\leq \min\left\{\frac{h\eta+N\eta^2\tau}{128GR_\ast\log(2/\delta)}, \frac{\sqrt{Nh\eta}}{L\log(2/\delta)},\frac{N\eta}{GL\log(2N/\delta)}\right\},
  \end{align*}
one can obtain
  \begin{align*}
   \frac{128 G R_\ast m\log\frac{2}{\delta}}{N\eta}-\eta\tau&\leq \frac{h}{N},\\
    \frac{L^2m^2\log^2\frac{2}{\delta}}{N^2\eta}&\leq \frac{h}{N},\\
    \frac{GLh\log(2N/\delta)m}{N^2\eta}&\leq \frac{h}{N}.
  \end{align*}
  Therefore, substituting the above equations into \eqref{eq-proof-theorem221},
  we have
  \begin{align*}
   R(\bar{f})-R(f_\ast)
   &\leq O\left(\frac{h}{N}+\frac{1}{N^2}+\frac{h m^2 \log^2(N)}{N^4}\right)\\
   &=O\left(\frac{h}{N}\right).
  \end{align*}
\end{proof}

\subsection{Proof of Theorem \ref{theorem-poly}}
\begin{proof}
  According to Assumption \ref{poly-metric}, we know that
  $$
    \log C(\mathcal{H},\epsilon)\simeq (1/\epsilon)^{1/h}.
  $$
  Thus, when setting $\epsilon=N^{-\frac{h}{2h+1}}$, one can see that
  \begin{align}
  \begin{aligned}
  \label{eq-midmdsf}
    D_{\mathcal{H},\delta,\epsilon}&=2\left(\log C(\mathcal{H},\epsilon)+\log\frac{2}{\delta}\right)
    \\&=2\left(N^\frac{1}{2h+1}+\log\frac{2}{\delta}\right).
  \end{aligned}
  \end{align}
  From \eqref{eq-middel-new} with $\epsilon=N^{-\frac{h}{2h+1}}$,
  we have
   \begin{align}
 \begin{aligned}
 \label{eq-middel-new-ploy}
   &~~~~R(\hat{f}_i)-R(f_\ast)+\frac{\eta}{4}\|\hat{f}_i-f_\ast\|_\mathcal{H}^2\\
   &\leq \frac{G  D_{\mathcal{H},\delta,\epsilon}\|\hat{f}_i-f_\ast\|_\mathcal{H}^2m}{N}\\
   &~~~+\frac{2G  D_{\mathcal{H},\delta,\epsilon}(R(\hat{f}_i)-R(f_\ast))m}{N\eta}\\&~~~+\frac{16L^2m^2\log^2(\frac{2}{\delta})}{N^2\eta}\\
   &~~~+\frac{64G R_\ast m\log(\frac{2}{\delta})}{N\eta}\\&~~~+\frac{64G^2}{\eta N^{\frac{2h}{2h+1}}}+
   \frac{32GL m D_{\mathcal{H},\delta,\epsilon}}{\eta N^{\frac{3h+1}{2h+1}}}
   \\&~~~+\frac{32G m^2 D^2_{\mathcal{H},\delta,\epsilon}}{\eta N^{2+\frac{2h}{2h+1}}}.
 \end{aligned}
 \end{align}
  Thus, when
  \begin{align*}
    m\leq \frac{N\eta}{4D_{\mathcal{H},\delta,\epsilon}}=\frac{N\eta}{4\big(N^{\frac{1}{2h+1}}+\log (2/\delta)\big)},
  \end{align*}
   we have
   \begin{align*}
   &~~~~R(\hat{f}_i)-R(f_\ast)+\frac{\eta}{4}\|\hat{f}_i-f_\ast\|_\mathcal{H}^2\\
   &\leq \frac{\eta}{4}\|\hat{f}_i-f_\ast\|_\mathcal{H}^2+\frac{1}{2}\left(R(\hat{f}_i)-R(f_\ast)\right)
   \\&~~~+\frac{16L^2m^2\log^2(\frac{2}{\delta})}{N^2\eta}\\
   &~~~+\frac{64G R_\ast m\log(\frac{2}{\delta})}{N\eta}\\&~~~+\frac{64G^2}{N^{\frac{2h}{2h+1}}\eta}+
   \frac{32GL m D_{\mathcal{H},\delta,\epsilon}}{N^{\frac{3h+1}{2h+1}}\eta}
   \\&~~~+\frac{32G m^2 D^2_{\mathcal{H},\delta,\epsilon}}{N^{2+\frac{2h}{2h+1}}\eta}.
 \end{align*}
 Thus, one can obtain that
 \begin{align*}
    &~~~~R(\hat{f}_i)-R(f_\ast)\\
   \\&\leq\frac{32L^2m^2\log^2(\frac{2}{\delta})}{N^2\eta}+\frac{128G R_\ast m\log(\frac{2}{\delta})}{N\eta}\\&~~~+\frac{128G^2}{N^{\frac{2h}{2h+1}}\eta}+
   \frac{64GL m D_{\mathcal{H},\delta,\epsilon}}{N^{\frac{3h+1}{2h+1}}\eta}\\&~~~+\frac{64G m^2 D^2_{\mathcal{H},\delta,\epsilon}}{N^{2+\frac{2h}{2h+1}}\eta}.
 \end{align*}
  Substituting the above inequality into \eqref{equaiton-strongly-ff},
we have
  \begin{align}
  \begin{aligned}
  \label{eq-proof-34353}
     &~~~~R(\bar{f})-R(f_\ast)\\
   \\&\leq\frac{32L^2m^2\log^2(\frac{2}{\delta})}{N^2\eta}\\&~~~+\frac{128G R_\ast m\log(\frac{2}{\delta})}{N\eta}\\&~~~+\frac{128G^2}{N^{\frac{2h}{2h+1}}\eta}+
   \frac{64GL m D_{\mathcal{H},\delta,\epsilon}}{N^{\frac{3h+1}{2h+1}}\eta}
   \\&~~~+\frac{64G m^2 D^2_{\mathcal{H},\delta,\epsilon}}{N^{2+\frac{2h}{2h+1}}\eta}-\eta\tau.
 \end{aligned}
 \end{align}
Note that,
$$D_{\mathcal{H},\delta,\epsilon}=2\left(N^{\frac{1}{2h+1}}+\log\frac{2}{\delta}\right)\leq 2N^{\frac{1}{2h+1}}\log\frac{2}{\delta}.$$
Thus, from \eqref{eq-proof-34353}, we can obtain that
\begin{align}
\begin{aligned}
\label{eq-fagagkjhga}
 &~~~~R(\bar{f})-R(f_\ast)\\
   \\&\leq\frac{32L^2m^2\log^2(\frac{2}{\delta})}{N^2\eta}+\frac{128G R_\ast m\log(\frac{2}{\delta})}{N\eta}
   \\&~~~+\frac{128G^2}{N^{\frac{2h}{2h+1}}\eta}+\frac{128 GLm\log\frac{2}{\delta}}{N^{\frac{3h}{2h+1}}\eta}+
   \\&~~~+\frac{256G m^2\log^2(2/\delta)}{N^{\frac{6h}{2h+1}}\eta}-\eta\tau.
\end{aligned}
\end{align}

  Note that, when
  $$m\leq \min\left\{\frac{\sqrt{\eta}N^{\frac{h+1}{2h+1}}}{L\log(2/\delta)},\frac{\eta N^{\frac{h}{2h+1}}}{GL\log(2/\delta)},
  \frac{N^{\frac{1}{2h+1}}\eta+N\eta^2\tau}{128 R_\ast \log(2/\delta)}
  \right\},$$
  one can obtain that
  \begin{align}
  \begin{aligned}
  \label{eq-fagjagl}
    \frac{L^2m^2\log^2(2/\delta)}{N^2\eta}&\leq N^{-\frac{2h}{2h+1}},\\
    \frac{GLm\log\frac{2}{\delta}}{N^{\frac{3h}{2h+1}}\eta}&\leq N^{-\frac{2h}{2h+1}},\\
    \frac{128 GR_\ast m\log\frac{2}{\delta}}{N\eta}-\eta\tau&\leq N^{-\frac{2h}{2h+1}}.
  \end{aligned}
  \end{align}
Substituting \eqref{eq-fagjagl} into \eqref{eq-fagagkjhga},
  we have
  \begin{align*}
    R(\bar{f})-R(f_\ast)\leq O\left(N^{-\frac{2h}{2h+1}}
    +\frac{m^2\log^2\frac{2}{\delta}}{N^{\frac{6h}{2h+1}}}\right).
  \end{align*}
  By \eqref{eq-fagjagl},
  we know that $$O\left(\frac{m\log\frac{2}{\delta}}{N^{\frac{3h}{2h+1}}}\right)\leq O\left(N^{-\frac{2h}{2h+1}}\right).$$
  Thus, we have
  \begin{align*}
    R(\bar{f})-R(f_\ast)&\leq O\left(N^{-\frac{2h}{2h+1}}+N^{-\frac{4h}{2h+1}}\right)\\
    &= O\left(N^{-\frac{2h}{2h+1}}\right).
  \end{align*}
\end{proof}
\subsection{Proof of Theorem \ref{the-non-strong}}
\begin{proof}
We set $\eta=0$ in \eqref{equation-first-equation},
and obtain that
  \begin{align}
    \label{equation-first-equation-nons}
    \begin{aligned}
      &~~~R(\hat{f}_i)-R(f_\ast)\\&
      \leq  \frac{G  D_{\mathcal{H},\delta,\epsilon}\|\hat{f}_i-f_\ast\|_\mathcal{H}^2m}{N}+
   \frac{G  D_{\mathcal{H},\delta,\epsilon}\epsilon \|\hat{f}_i-f_\ast\|_\mathcal{H}m}{N}
   \\&~~~+2G\epsilon \|\hat{f}_i-f_\ast\|_\mathcal{H}
   \\&~~~+\|\hat{f}_i-f_\ast\|_\mathcal{H}\sqrt{\frac{G  D_{\mathcal{H},\delta,\epsilon}(R(\hat{f}_i)-R(f_\ast))m}{N}}\\&~~~+
   \|\hat{f}_i-f_\ast\|_\mathcal{H}\sqrt{\frac{GL \epsilon m D_{\mathcal{H},\delta,\epsilon}}{N}} \\&~~~+\frac{2L \log(\frac{2}{\delta})\|\hat{f}_i-f_\ast\|_\mathcal{H}m}{N}\\&~~~
   +\|\hat{f}_i-f_\ast\|_\mathcal{H}\sqrt{\frac{8G R_\ast m \log(\frac{2}{\delta})}{N}}.
    \end{aligned}
  \end{align}
  Note that
  $$
    \sqrt{ab}\leq \frac{a}{2c}+\frac{bc}{2}, \forall a,b,c\geq 0.
  $$
  Thus, we have
  \begin{align}
    \label{mid-nonstrong}
    \begin{aligned}
    &~~~\|\hat{f}_i-f_\ast\|_\mathcal{H}\sqrt{\frac{G mD_{\mathcal{H},\delta,\epsilon}(R(\hat{f}_i)-R(f_\ast))}{N}}\\&\leq
    \frac{G m D_{\mathcal{H},\delta,\epsilon}\|\hat{f}_i-f_\ast\|_\mathcal{H}^2}{2N}+\frac{R(\hat{f}_i)-R(f_\ast)}{2};\\
    &~~~\|\hat{f}_i-f_\ast\|_\mathcal{H}\sqrt{\frac{GL m \epsilon D_{\mathcal{H},\delta,\epsilon}}{N}}
    \\&\leq \frac{L mD_{\mathcal{H},\delta,\epsilon} \|\hat{f}_i-f_\ast\|^2_\mathcal{H}}{2N}+\frac{G\epsilon}{2}.
  \end{aligned}
  \end{align}
  Substituting \eqref{mid-nonstrong} into \eqref{equation-first-equation-nons},
  we get
  \begin{align}
  \label{eq-mide-ddfg}
  \begin{aligned}
    &~~~\frac{1}{2}(R(\hat{f}_i)-R(f_\ast))
    \\&\leq
    \frac{G m D_{\mathcal{H},\delta,\epsilon}\|\hat{f}_i-f_\ast\|_\mathcal{H}^2}{N}+
   \frac{G m \epsilon D_{\mathcal{H},\delta,\epsilon} \|\hat{f}_i-f_\ast\|_\mathcal{H}}{N}
   \\&~~~+2G\epsilon \|\hat{f}_i-f_\ast\|_\mathcal{H}+\frac{G m D_{\mathcal{H},\delta,\epsilon}\|\hat{f}_i-f_\ast\|_\mathcal{H}^2}{2N}
   \\&~~~+\frac{L m D_{\mathcal{H},\delta,\epsilon} \|\hat{f}_i-f_\ast\|^2_\mathcal{H}}{2N}+\frac{G\epsilon}{2}\\
   &~~~ +\frac{2L m \log(\frac{2}{\delta})\|\hat{f}_i-f_\ast\|_\mathcal{H}}{N}
   \\&~~~+\|\hat{f}_i-f_\ast\|_\mathcal{H}\sqrt{\frac{8G m R_\ast \log(\frac{2}{\delta})}{N}}.
  \end{aligned}
  \end{align}
  If $\forall f\in \mathcal{H}$, $\|f\|_\mathcal{H}\leq B$, one can see that
  $\|\hat{f}_i-f_\ast\|_\mathcal{H}\leq 2B$.
  Thus, from \eqref{eq-mide-ddfg},
  we have
  \begin{align}
  \begin{aligned}
  \label{the-eq-minggd}
    &~~~R(\hat{f}_i)-R(f_\ast)
    \\&\leq
    \frac{8G B^2 mD_{\mathcal{H},\delta,\epsilon}}{N}+
   \frac{4BG m D_{\mathcal{H},\delta,\epsilon}\epsilon}{N}+8BG\epsilon\\&~~~+\frac{4B^2G m D_{\mathcal{H},\delta,\epsilon}}{N}
   +\frac{4B^2L mD_{\mathcal{H},\delta,\epsilon}}{N}+G\epsilon\\&~~~+\frac{8BLm \log(\frac{2}{\delta})}{N}+
   4B\sqrt{\frac{8G m R_\ast \log(\frac{2}{\delta})}{N}}.
  \end{aligned}
  \end{align}

Note that
\begin{align*}
    D_{\mathcal{H},\delta,\epsilon}=2\Big(\log C(\mathcal{H},\epsilon)+\log{2}/{\delta}\Big)
    =2h\log ({2}/{\delta\epsilon}).
  \end{align*}
  From  \eqref{the-eq-minggd} with $\epsilon=1/N$,
  we have
  \begin{align*}
     R(\hat{f}_i)-R(f_\ast)\leq
     O\left(
     \frac{mh\log\frac{N}{\delta}}{N}+\sqrt{\frac{m R_\ast \log\frac{1}{\delta}}{N}}
     \right).
  \end{align*}
  Substituting the above equation into \eqref{equaiton-strongly-ff} with $\eta=0$,
  we have
  \begin{align*}
     R(\bar{f})-R(f_\ast)&\leq R(\hat{f}_i)-R(f_\ast)\\
     &\leq
     O\left(
     \frac{mh\log\frac{N}{\delta}}{N}+\sqrt{\frac{m R_\ast \log\frac{1}{\delta}}{N}}
     \right).
  \end{align*}
  So, when $m\leq O(N^r)$, we can get,
  \begin{align*}
     R(\bar{f})-R(f_\ast)\leq
     O\left(
     \frac{h\log\frac{N}{\delta}}{N^{1-r}}+\sqrt{\frac{R_\ast \log\frac{1}{\delta}}{N^{1-r}}}
     \right).
  \end{align*}
  If the optimal risk $R_\ast\leq O\left(N^{r-1}\right)$,
  then
  \begin{align*}
    \sqrt{\frac{R_\ast \log\frac{1}{\delta}}{N^{1-r}}}\leq O\left(\frac{1}{N^{1-r}}\right).
  \end{align*}
  Thus, in this case, we have
  \begin{align*}
    R(\bar{f})-R(f_\ast)\leq
     O\left(
        \frac{h\log\frac{N}{\delta}}{N^{1-r}}
     \right).
  \end{align*}
\end{proof}

\bibliographystyle{IEEEtran}
\bibliography{TIT}

\begin{thebibliography}{10}
\providecommand{\url}[1]{#1}
\csname url@samestyle\endcsname
\providecommand{\newblock}{\relax}
\providecommand{\bibinfo}[2]{#2}
\providecommand{\BIBentrySTDinterwordspacing}{\spaceskip=0pt\relax}
\providecommand{\BIBentryALTinterwordstretchfactor}{4}
\providecommand{\BIBentryALTinterwordspacing}{\spaceskip=\fontdimen2\font plus
\BIBentryALTinterwordstretchfactor\fontdimen3\font minus
  \fontdimen4\font\relax}
\providecommand{\BIBforeignlanguage}[2]{{%
\expandafter\ifx\csname l@#1\endcsname\relax
\typeout{** WARNING: IEEEtran.bst: No hyphenation pattern has been}%
\typeout{** loaded for the language `#1'. Using the pattern for}%
\typeout{** the default language instead.}%
\else
\language=\csname l@#1\endcsname
\fi
#2}}
\providecommand{\BIBdecl}{\relax}
\BIBdecl

\bibitem{zhou2014big}
Z.-H. Zhou, N.~V. Chawla, Y.~Jin, and G.~J. Williams, ``Big data opportunities
  and challenges: Discussions from data analytics perspectives,'' \emph{IEEE
  Computational Intelligence Magazine}, vol.~9, no.~4, pp. 62--74, 2014.

\bibitem{Zhang2013}
Y.~Zhang, J.~Duchi, and M.~Wainwright, ``Divide and conquer kernel ridge
  regression,'' in \emph{Proceedings of Conference on Learning Theory (COLT)},
  2013, pp. 592--617.

\bibitem{lin2017distributed}
S.-B. Lin, X.~Guo, and D.-X. Zhou, ``Distributed learning with regularized
  least squares,'' \emph{The Journal of Machine Learning Research}, vol.~18,
  no.~1, pp. 3202--3232, 2017.

\bibitem{zhang2015divide}
Y.~Zhang, J.~Duchi, and M.~Wainwright, ``Divide and conquer kernel ridge
  regression: A distributed algorithm with minimax optimal rates,'' \emph{The
  Journal of Machine Learning Research}, vol.~16, no.~1, pp. 3299--3340, 2015.

\bibitem{li2013statistical}
R.~Li, D.~Lin, and B.~Li, ``Statistical inference in massive data sets,''
  \emph{Applied Stochastic Models in Business and Industry}, vol.~29, no.~5,
  pp. 399--409, 2013.

\bibitem{zhang2012communication}
Y.~Zhang, M.~J. Wainwright, and J.~C. Duchi, ``Communication-efficient
  algorithms for statistical optimization,'' in \emph{Advances in Neural
  Information Processing Systems 25 (NIPS)}, 2012, pp. 1502--1510.

\bibitem{xu2016feasibility}
C.~Xu, Y.~Zhang, R.~Li, and X.~Wu, ``On the feasibility of distributed kernel
  regression for big data,'' \emph{IEEE Transactions on Knowledge and Data
  Engineering}, vol.~28, no.~11, pp. 3041--3052, 2016.

\bibitem{mucke2018parallelizing}
N.~M{\"u}cke and G.~Blanchard, ``Parallelizing spectrally regularized kernel
  algorithms,'' \emph{The Journal of Machine Learning Research}, vol.~19,
  no.~1, pp. 1069--1097, 2018.

\bibitem{Lin2018dk}
S.-B. Lin and D.-X. Zhou, ``Distributed kernel-based gradient descent
  algorithms,'' \emph{Constructive Approximation}, vol.~47, no.~2, pp.
  249--276, 2018.

\bibitem{chang2017divide}
X.~Chang, S.-B. Lin, and Y.~Wang, ``Divide and conquer local average
  regression,'' \emph{Electronic Journal of Statistics}, vol.~11, no.~1, pp.
  1326--1350, 2017.

\bibitem{zhou2002covering}
D.-X. Zhou, ``The covering number in learning theory,'' \emph{Journal of
  Complexity}, vol.~18, no.~3, pp. 739--767, 2002.

\bibitem{vaart1996weak}
A.~W. Vaart and J.~A. Wellner, \emph{Weak Convergence and Empirical Processes:
  With Applications to Statistics}.\hskip 1em plus 0.5em minus 0.4em\relax
  Springer, 1996.

\bibitem{carl1980inequalities}
B.~Carl and H.~Triebel, ``Inequalities between eigenvalues, entropy numbers,
  and related quantities of compact operators in banach spaces,''
  \emph{Mathematische Annalen}, vol. 251, no.~2, pp. 129--133, 1980.

\bibitem{gu2013smoothing}
C.~Gu, \emph{Smoothing Spline ANOVA Models}.\hskip 1em plus 0.5em minus
  0.4em\relax Springer Science \& Business Media, 2013.

\bibitem{ehrenfeucht1989general}
A.~Ehrenfeucht, D.~Haussler, M.~Kearns, and L.~Valiant, ``A general lower bound
  on the number of examples needed for learning,'' \emph{Information and
  Computation}, vol.~82, no.~3, pp. 247--261, 1989.

\bibitem{zhivotovskiy2016localization}
N.~Zhivotovski and S.~Hanneke, ``Localization of vc classes: Beyond local
  rademacher complexities,'' in \emph{Proccedings of the 27th International
  Conference on Algorithmic Learning Theory (ALT)}.\hskip 1em plus 0.5em minus
  0.4em\relax Springer, 2016, pp. 18--33.

\bibitem{hanneke2016refined}
S.~Hanneke, ``Refined error bounds for several learning algorithms,'' \emph{The
  Journal of Machine Learning Research}, vol.~17, no.~1, pp. 4667--4721, 2016.

\bibitem{Raskutti2012}
G.~Raskutti, M.~J.Wainwright, and B.~Yu, ``Minimax-optimal rates for sparse
  additive models over kernel classes via convex programming,'' \emph{Journal
  of Machine Learning Research}, vol.~13, pp. 389--427, 2012.

\bibitem{chang2017distributed}
X.~Chang, S.-B. Lin, and D.-X. Zhou, ``Distributed semi-supervised learning
  with kernel ridge regression,'' \emph{The Journal of Machine Learning
  Research}, vol.~18, no.~1, pp. 1493--1514, 2017.

\bibitem{vapnik1998naturestatistical}
V.~Vapnik, \emph{The Nature of Statistical Learning Theory}.\hskip 1em plus
  0.5em minus 0.4em\relax Springer Verlag, 2000.

\bibitem{bartlett2003rademacher}
P.~L. Bartlett and S.~Mendelson, ``Rademacher and {G}aussian complexities: risk
  bounds and structural results,'' \emph{Journal of Machine Learning Research},
  vol.~3, pp. 463--482, 2002.

\bibitem{de2005model}
E.~D. Vito, A.~Caponnetto, and L.~Rosasco, ``Model selection for regularized
  least-squares algorithm in learning theory,'' \emph{Foundations of
  Computational Mathematics}, vol.~5, no.~1, pp. 59--85, 2005.

\bibitem{Bartlett2005Local}
P.~L. Bartlett, O.~Bousquet, and S.~Mendelson, ``Local {Rademacher}
  complexities,'' \emph{The Annals of Statistics}, vol.~33, no.~4, pp.
  1497--1537, 2005.

\bibitem{Ding2014mscn}
L.~Ding and S.~Liao, ``Model selection with the covering number of the ball of
  {RKHS},'' in \emph{Proceedings of the 23rd ACM International Conference on
  Information and Knowledge Management (CIKM)}, 2014, pp. 1159--1168.

\bibitem{caponnetto2007optimal}
A.~Caponnetto and E.~D. Vito, ``Optimal rates for the regularized least-squares
  algorithm,'' \emph{Foundations of Computational Mathematics}, vol.~7, no.~3,
  pp. 331--368, 2007.

\bibitem{steinwart2009optimal}
I.~Steinwart, D.~Hus, and C.~Scovel, ``Optimal rates for regularized least
  squares regression.'' in \emph{Proceedings of the Conference on Learning
  Theory (COLT)}, 2009.

\bibitem{smale2007learning}
S.~Smale and D.-X. Zhou, ``Learning theory estimates via integral operators and
  their approximations,'' \emph{Constructive approximation}, vol.~26, no.~2,
  pp. 153--172, 2007.

\bibitem{steinwart2008support}
I.~Steinwart and A.~Christmann, \emph{Support Vector Machines}.\hskip 1em plus
  0.5em minus 0.4em\relax New York: Springer Verlag, 2008.

\bibitem{Zhang2017er}
L.~Zhang, T.~Yang, and R.~Jin, ``Empirical risk minimization for stochastic
  convex optimization: ${O}(1/n) $-and $ {O}(1/n^2) $-type of risk bounds,'' in
  \emph{Proceedings of the Conference on Learning Theory (COLT)}, 2017, pp.
  1954--1979.

\bibitem{shamir2014distributed}
O.~Shamir and N.~Srebro, ``Distributed stochastic optimization and learning,''
  in \emph{Proceedings of the 52nd Annual Allerton Conference on Communication,
  Control, and Computing}, 2014, pp. 850--857.

\bibitem{yan2013distributed}
F.~Yan, S.~Sundaram, S.~Vishwanathan, and Y.~Qi, ``Distributed autonomous
  online learning: Regrets and intrinsic privacy-preserving properties,''
  \emph{IEEE Transactions on Knowledge and Data Engineering}, vol.~25, no.~11,
  pp. 2483--2493, 2013.

\bibitem{guo2017learning}
Z.-C. Guo, S.-B. Lin, and D.-X. Zhou, ``Learning theory of distributed spectral
  algorithms,'' \emph{Inverse Problems}, vol.~33, no.~7, p. 074009, 2017.

\bibitem{guo2017thresholded}
Z.-C. Guo, D.-H. Xiang, X.~Guo, and D.-X. Zhou, ``Thresholded spectral
  algorithms for sparse approximations,'' \emph{Analysis and Applications},
  vol.~15, no.~03, pp. 433--455, 2017.

\bibitem{hu2019distributed}
T.~Hu, Q.~Wu, and D.-X. Zhou, ``Distributed kernel gradient descent algorithm
  for minimum error entropy principle,'' \emph{Applied and Computational
  Harmonic Analysis}, 2019.

\bibitem{lin2020distributed}
S.-B. Lin, D.~Wang, and D.-X. Zhou, ``Distributed kernel ridge regression with
  communications,'' \emph{arXiv preprint arXiv:2003.12210}, 2020.

\bibitem{boyd2004convex}
S.~Boyd and L.~Vandenberghe, \emph{Convex {O}ptimization}.\hskip 1em plus 0.5em
  minus 0.4em\relax Cambridge university press, 2004.

\bibitem{Nesterov2004}
Y.~Nesterov, \emph{Introductory Lectures on Convex Optimization: A Basic
  Course}.\hskip 1em plus 0.5em minus 0.4em\relax Kluwer Academic Publishers,
  2004.

\bibitem{srebro2010optimistic}
N.~Srebro, K.~Sridharan, and A.~Tewari, ``Optimistic rates for learning with a
  smooth loss,'' \emph{arXiv preprint arXiv:1009.3896}, 2010.

\end{thebibliography}

\end{document}